\let\endminwd\relax
\newcolumntype{L}[1]{>{\collectcell\xminwd l{#1}}l<{\endminwd\endcollectcell}}
\newcolumntype{C}[1]{>{\collectcell\xminwd c{#1}}c<{\endminwd\endcollectcell}}
\newcolumntype{R}[1]{>{\collectcell\xminwd r{#1}}r<{\endminwd\endcollectcell}}
\def\minwd#1#2#3\endminwd{\stackengine{0pt}{#3}{\rule{#2}{0pt}}{O}{#1}{F}{F}{L}}
\newcommand\xminwd[1]{\minwd#1}
\newtheorem{theorem}{Theorem}
\begin{document}

\title{Sequential Data Augmentation for Generative Recommendation}

\settopmatter{authorsperrow=4}
\author{Geon Lee}
\authornote{Work completed during internship at Snap Inc.}
    \affiliation{%
        \institution{KAIST}
        \city{Seoul}
        \country{Republic of Korea}
    }
    \email{geonlee0325@kaist.ac.kr}
\author{Bhuvesh Kumar} 
    \affiliation{%
        \institution{Snap Inc.}
        \city{Bellevue}
        \state{WA}
        \country{USA}
    }
    \email{bkumar4@snap.com}
\author{Mingxuan Ju} 
    \affiliation{%
        \institution{Snap Inc.}
        \city{Bellevue}
        \state{WA}
        \country{USA}
    }
    \email{mju@snap.com}
\author{Tong Zhao} 
    \affiliation{%
        \institution{Snap Inc.}
        \city{Bellevue}
        \state{WA}
        \country{USA}
    }
    \email{tong@snap.com}
\author{Kijung Shin} 
    \affiliation{%
        \institution{KAIST}
        \city{Seoul}
        \country{Republic of Korea}
    }
    \email{kijungs@kaist.ac.kr}
\author{Neil Shah} 
    \affiliation{%
        \institution{Snap Inc.}
        \city{Bellevue}
        \state{WA}
        \country{USA}
    }
    \email{nshah@snap.com}
\author{Liam Collins} 
    \affiliation{%
        \institution{Snap Inc.}
        \city{Bellevue}
        \state{WA}
        \country{USA}
    }
    \email{lcollins2@snap.com}

\begin{CCSXML}
<ccs2012>
   <concept>
       <concept_id>10002951.10003317.10003347.10003350</concept_id>
       <concept_desc>Information systems~Recommender systems</concept_desc>
       <concept_significance>500</concept_significance>
       </concept>
   <concept>
       <concept_id>10010147.10010178.10010187.10010190</concept_id>
       <concept_desc>Computing methodologies~Probabilistic reasoning</concept_desc>
       <concept_significance>500</concept_significance>
       </concept>
 </ccs2012>
\end{CCSXML}

\ccsdesc[500]{Information systems~Recommender systems}
\ccsdesc[500]{Computing methodologies~Probabilistic reasoning}

\keywords{
Generative Recommendation; Sequential Recommendation; Data Augmentation; Input–Target Distribution; Generalization
}


\begin{abstract}
Generative recommendation plays a crucial role in personalized systems, predicting users’ future interactions from their historical behavior sequences.
A critical yet underexplored factor in training these models is data augmentation, the process of constructing training data from user interaction histories.
By shaping the training distribution, data augmentation directly and often substantially affects model generalization and performance.
Nevertheless, in much of the existing work, this process is simplified, applied inconsistently, or treated as a minor design choice, without a systematic and principled understanding of its effects.

Motivated by our empirical finding that different augmentation strategies can yield large performance disparities, we conduct an in-depth analysis of how they reshape training distributions and influence alignment with future targets and generalization to unseen inputs.
To systematize this design space, we propose \method, a generalized and principled framework that models augmentation as a stochastic sampling process over input–target pairs with three bias-controlled steps: sequence sampling, target sampling, and input sampling. 
This formulation unifies widely used strategies as special cases and enables flexible control of the resulting training distribution.
Our extensive experiments on benchmark and industrial datasets demonstrate that \method yields superior accuracy, data efficiency, and parameter efficiency compared to existing strategies, providing practical guidance for principled training data construction in generative recommendation.
Our code is available at \url{https://github.com/snap-research/GenPAS}.
\end{abstract}

\newcommand\red[1]{\textcolor{red}{#1}}
\newcommand\blue[1]{\textcolor{blue}{#1}}

\definecolor{seagreenbright}{rgb}{0.25, 0.80, 0.45}
\newcommand\ourgreen[1]{\textcolor{seagreenbright}{#1}}

\newcommand{\smallsection}[1]{{\vspace{0.05in} \noindent {\bf{\underline{\smash{#1}}}}}}

\newtheorem{assumption}[theorem]{Assumption}

\newcommand{\cmark}{\ding{51}}  
\newcommand{\xmark}{\ding{55}}  

\newcommand{\method}{\textsc{GenPAS}\xspace}

\maketitle

\section{Introduction}
\label{sec:intro}
Generative recommendation (GR) plays a crucial role in modern personalized systems by training models to predict the next item a user will interact with based on historical behavior sequences. 
These systems are widely deployed in real-world applications in which understanding a user's evolving interests is critical for delivering timely and relevant recommendations \cite{singh2024betteryoutube, deng2025onerec, Kim2025amazon, grid}.

A common setting in GR is to leverage past user interactions to train models for predicting future ones. 
A key challenge is data sparsity, where users engage with only a small fraction of available items, making it difficult for models to accurately predict future behavior. 
Thus, it is crucial to effectively exploit the available historical data to train models that generalize well and provide accurate predictions on unseen future interactions \cite{yang2024unifyingmeta}.

An often overlooked yet critical design choice in GR is how to curate training samples from user interaction histories.
This process, referred to as \textit{data augmentation}, is often simplified, applied inconsistently, or treated as a minor implementation detail.
However, data augmentation fundamentally dictates the statistical properties of the training data, which can substantially influence model generalization and overall performance~\cite{zhou2024contrastive}.
Nevertheless, much of the progress in GR has focused on architectural or optimization techniques~\cite{kang2018self,rajput2023recommender}, with limited attention paid to understanding how data augmentation impacts model performance.

To address this gap, we conduct an in-depth analysis of data augmentation in GR.
We examine three widely used strategies for constructing training data, specifically, Last-Target, Multi-Target, and Slide-Window.
Each strategy reshapes the training distribution in distinct ways, particularly in terms of (i) the target distribution (i.e., how frequently each item is selected as a prediction target) and (ii) the joint input-target distribution (i.e., how historical interaction sequences are paired with their prediction targets). 
Our empirical results show that the choice of augmentation strategy can substantially affect model performance, with observed differences of up to 783.7\% in NDCG@10.

Building on these observations, we investigate which properties of augmented training data make some strategies outperform others.
First, we empirically and theoretically show that stronger alignment between the training and future target distributions correlates with higher performance.
Second, we examine how training input composition affects generalization to unseen input sequences.
We capture these effects by introducing two measures: \textit{alignment}, which measures how well the training data includes inputs structurally similar to a given test input with the same target, and \textit{discrimination}, which measures how well it includes structurally dissimilar inputs with different targets.
We observe that model performance is strongly related to the balance between these measures.
This raises a natural question: \textit{Do widely used data augmentation strategies produce an optimal training distribution for achieving these properties, or are they merely suboptimal heuristics?}

To answer this question, we present \method, a generalized and principled augmentation framework.
\method interprets data augmentation as a stochastic sampling process over input–target pairs, decomposed into three fundamental steps: sequence sampling, target sampling, and input sampling. 
By adjusting its bias parameters at each step, \method can flexibly shape both the target distribution and the joint input–target distribution for each dataset.
Notably, common data augmentation strategies are special cases corresponding to particular parameter settings, leaving room for discovering more effective configurations.
Finally, we present a heuristic yet effective parameter search scheme that reduces the search space by filtering out configurations that, according to our earlier empirical analysis, exhibit poor training properties.

Our experiments across benchmark and industrial datasets demonstrate the effectiveness of \method.
When parameters are properly tuned for each dataset, GR models trained on \method-augmented data significantly outperform those trained with conventional strategies.
Furthermore, \method consistently surpasses common input sequence–level augmentations (e.g., insertion, deletion, reordering).
\method is also both data-efficient (i.e., achieving strong results with less training data) and parameter-efficient (i.e., enabling smaller models with \method to outperform large models with alternative approaches).
These results emphasize the critical role of a properly shaped training data distribution in driving both the generalization ability and the overall effectiveness of GR models. 

We summarize our contributions as follows:
\begin{itemize}[leftmargin=*]
    \item \textbf{In-Depth Analysis.} 
    We conduct a thorough empirical and theoretical analysis of widely used data augmentation strategies for GR.
    Our findings reveal that these strategies reshape the target and input–target distributions, which significantly influence model generalization and performance.

    \item \textbf{Generalized Framework.} 
    We introduce \method, a principled three-step sampling framework for constructing training data that unifies existing strategies as special cases. Its bias-control parameters allow precise control over the target and input–target distributions, enabling flexible adaptation to different datasets.
    
    \item \textbf{Strong Performance.} 
    Across both benchmark and industrial datasets, we show that GR models trained on data augmented by \method achieve substantial gains in accuracy and efficiency.
\end{itemize}

\noindent
Our code is available at \url{https://github.com/snap-research/GenPAS}.


\section{Preliminaries}
\label{sec:prelim}
In this section, we review the problem of generative recommendation (GR) and common approaches to constructing training data.

\subsection{Problem Setup}
Let $\mathcal{U}$ and $\mathcal{I}$ denote the sets of users and items, respectively.  
Each user $u \in \mathcal{U}$ has an interaction history of an ordered sequence $s^{(u)} = [i_1^{(u)}, i_2^{(u)}, \dots, i_{|s^{(u)}|}^{(u)}]$, where $i_k^{(u)} \in \mathcal{I}$ denotes the $k$\textsuperscript{th} item interacted with by user $u$.  
The goal of GR is to predict the next item $i_{|s^{(u)}|+1}^{(u)} \in \mathcal{I}$ that the user will engage with, {based on its past interactions, i.e., $s^{(u)}$.}
Training GR models requires constructing a dataset of input–target pairs derived from user sequences.


\subsection{Training Data Construction}\label{sec:prelim:strategies}
While most works adopt a \textit{leave-last-out} data split, where the last item in each user sequence is held out for testing, the second-to-last item for validation, and the remaining preceding items for training~\cite{kang2018self,rajput2023recommender,xie2022contrastive,zhou2020s3}, the process of constructing the training data itself is far less standardized. 
Importantly, this construction process can be viewed as a form of \textit{data augmentation}, in which multiple input-target pairs are derived from each user sequence. 

Formally, the training data is the union of user-specific training sets, $\mathcal{D}_\text{train}=\bigcup_{u\in \mathcal{U}}\mathcal{D}_\text{train}^{(u)}$, where $\mathcal{D}_\text{train}^{(u)}$ is the set of input-target pairs constructed from user $u$'s interaction sequence $s^{(u)}$.
We review three common strategies for constructing $\mathcal{D}_\text{train}^{(u)}$ in GR.

\smallsection{Last-Target Strategy (LT).}
A widely-adopted approach is to generate a single training sample per user using the last item in the training sequence as the prediction target~\cite{zhou2024contrastive}. 
For each user $u\in\mathcal{U}$, the input is all but the last item, and the target is the last item:
\begin{equation*}
    x^{(u)}=\left[i_1^{(u)}, \dots, i_{|s^{(u)}|-1}^{(u)}\right]\;,\;\;\;\; y^{(u)}=i_{|s^{(u)}|}^{(u)}.
\end{equation*}
The resulting training set for user $u$ is $\mathcal{D}_\text{train}^{(u)}=\left\{(x^{(u)},y^{(u)})\right\}$.

\smallsection{Multi-Target Strategy (MT).}
Some methods implicitly~\cite{kang2018self} or explicitly~\cite{zhou2020s3,chen2022intent} predict multiple targets within each user sequence. 
For each position $k\in\{2,\dots,|s^{(u)}|\}$, the first $k-1$ items are used as the input and the $k$\textsuperscript{th} item as the target:
\begin{equation*}
    x_k^{(u)}=\left[i_1^{(u)}, \dots, i_{k-1}^{(u)}\right]\;,\;\;\;\; y_k^{(u)}=i_{k}^{(u)}.
\end{equation*}
This produces $|s^{(u)}|-1$ input-target pairs for each user sequence:
\begin{equation*}
\mathcal{D}_\text{train}^{(u)} = \left\{ (x_k^{(u)}, y_k^{(u)}) \mid 2\leq k\leq|s^{(u)}| \right\}.
\end{equation*}
This strategy subsumes the last-target strategy, while providing additional supervision signals from earlier positions in the sequence.

\smallsection{Slide-Window Strategy (SW).}
Some methods, such as TIGER~\cite{rajput2023recommender}, apply a more aggressive slide-window augmentation that extracts all possible contiguous subsequences from each user sequence.~\footnote{While the original paper~\cite{rajput2023recommender} did not specify the exact construction procedure, a recent reproducibility study~\cite{ju2025generative} reported that all possible window sizes were used.}
For each target position $k \in \{2, \dots, |s^{(u)}|\}$, all contiguous subsequences ending at position $k-1$ are considered as inputs.
Formally, for each window size $w \in \{1, \dots, k-1\}$, a training pair is defined:
\begin{equation*}
x_{k,w}^{(u)} = \left[i_{k-w}^{(u)}, \dots, i_{k-1}^{(u)}\right], \quad y_{k,w}^{(u)} = i_{k}^{(u)}.
\end{equation*}
This produces $|s^{(u)}|(|s^{(u)}|-1)/2$ input-target pairs for each user sequence. 
The resulting training set for user $u$ is:
\begin{equation*}
\mathcal{D}_\text{train}^{(u)} = \left\{ (x_{k,w}^{(u)}, y_{k,w}^{(u)}) \mid 2 \le k \le |s^{(u)}|,\; 1 \le w \le k-1 \right\}.
\end{equation*}
This strategy subsumes the causal multi-target strategy, while introducing more diverse inputs associated with each target.


\section{Analysis of Data Augmentation Strategies}
\label{sec:prelim_study}

In this section, we empirically examine how existing data augmentation strategies, discussed in Section~\ref{sec:prelim:strategies}, influence training data distribution and, in turn, the performance of GR models.

\subsection{Impact on Training Data Distribution}\label{sec:prelim:distribution}
We examine how different augmentation strategies reshape the training data distribution, which consequently affects model performance, as shown in the next subsections.

\smallsection{Impact on Target Distribution.}
In Figure~\ref{fig:target_distribution}, we compare the target probability, i.e., normalized frequency with which each item appears as the prediction target.
The distributions are clearly distinct.
LT yields training data in which a relatively large fraction of items have high target probabilities, leading to a more skewed distribution.
In contrast, MT and SW include more items with lower target probabilities, although their overall distribution shapes differ.

\smallsection{Impact on Input-Target Pair Distribution.}
In Figure~\ref{fig:input_distribution}, we compare the number of inputs associated with each target under different strategies. 
The distributions differ substantially.
LT links most targets to only a few inputs, while MT increases the number of inputs per target by using more positions in the sequence as targets, and SW yields the most inputs per target by enumerating all possible subsequences.

\begin{figure}[t]
    \centering
    \includegraphics[width=0.63\linewidth]{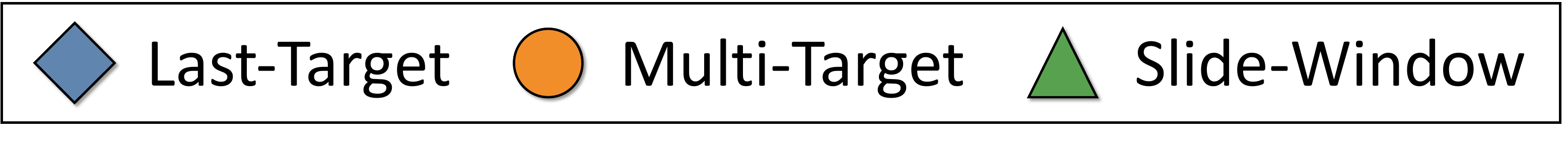}\\
    \includegraphics[width=0.46\linewidth]{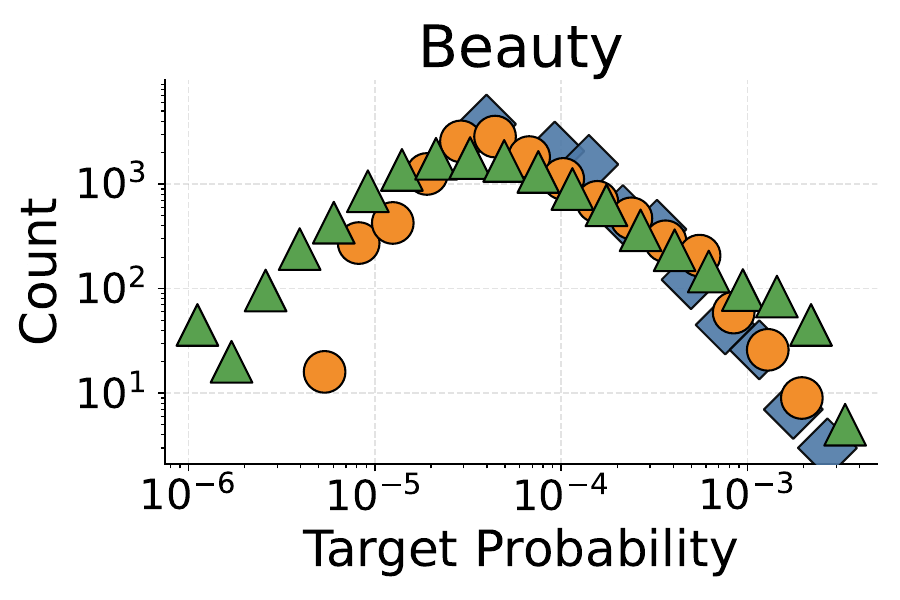}
    \hspace{8pt}
    \includegraphics[width=0.46\linewidth]{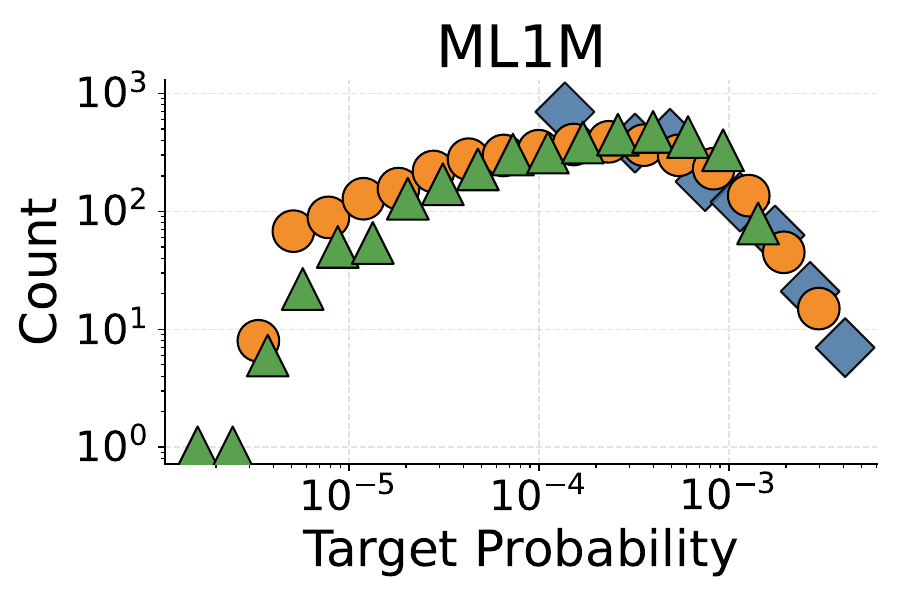}
    \caption{
    Different strategies yield distinct target distributions.
    LT skews toward 
    {frequent items}, while MT and SW produce a more balanced distribution.
    \label{fig:target_distribution}}
\end{figure}

\begin{figure}[t]
    \centering
    \includegraphics[width=0.63\linewidth]{FIG_NEW/legend_strategy.pdf}\\
    \includegraphics[width=0.46\linewidth]{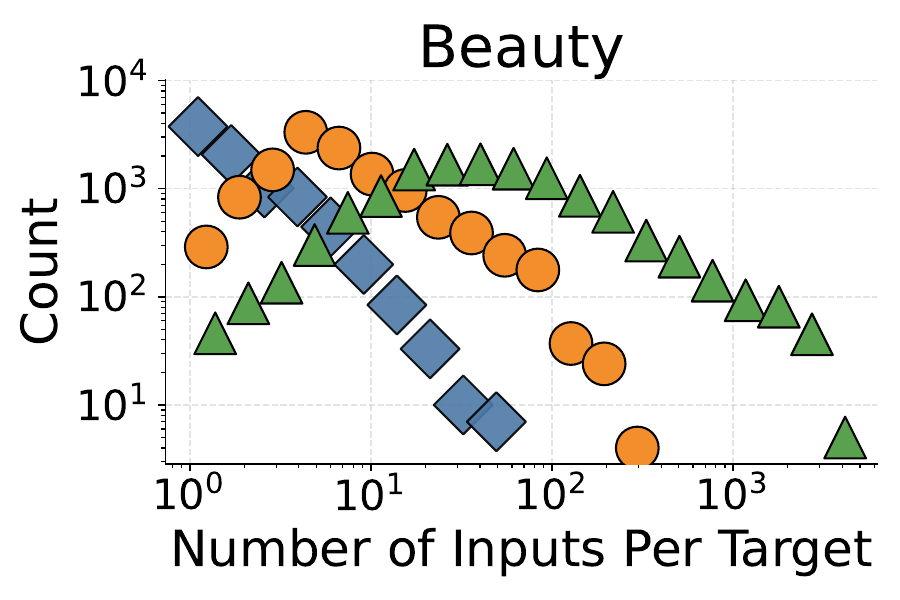}
    \hspace{8pt}
    \includegraphics[width=0.46\linewidth]{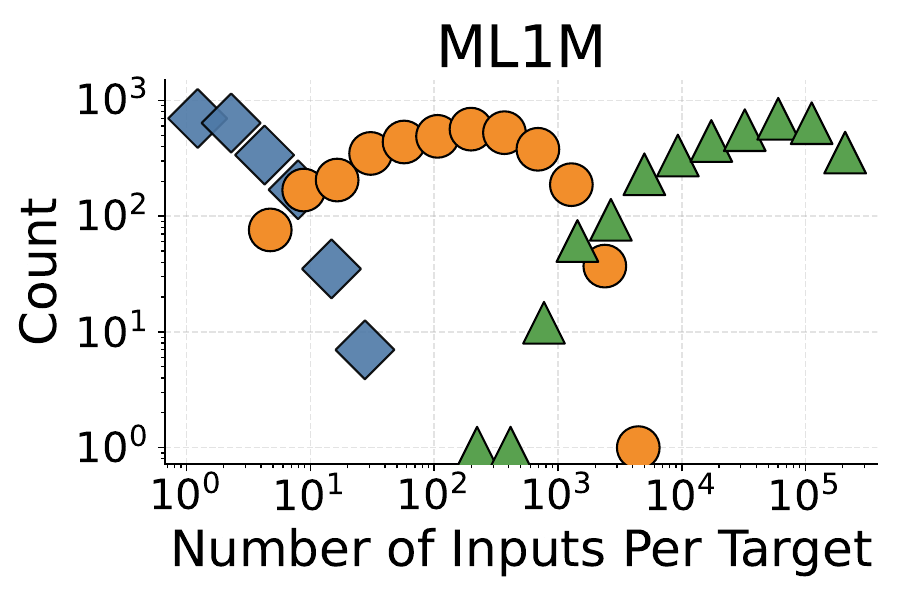}
    \caption{
    Different strategies produce distinct input-target distributions. 
    LT yields few inputs per target, MT increases this with more target positions, and SW produces the most by enumerating all subsequences.
    \label{fig:input_distribution}}
\end{figure}

\subsection{Impact on Model Performance}
We evaluate SASRec~\cite{kang2018self}~\footnote{We implement SASRec without its
MT objective.} and TIGER~\cite{rajput2023recommender} under different data augmentation strategies.
As shown in Table~\ref{tab:prelim}, both models exhibit substantial performance variation depending on the strategy.
For instance, TIGER achieves a 783.7\% higher NDCG@10 on ML1M with MT compared to LT.
These results demonstrate that \textit{GR models are highly sensitive to training data distribution}, emphasizing the critical role of data augmentation in model effectiveness.

Interestingly, although SW generates the most diverse set of input-target pairs, it often underperforms MT, which produces a strict subset of these pairs.
This indicates that \textit{the shape of the training distribution can play a more critical role than the simple diversity of training samples}.

In addition, performance variation resulting from different data augmentation strategies often exceeds that from changing model architectures.
For example, under the same setting, the maximum performance gap across models is 305.4\% (LT on Sports), while across strategies it reaches 783.7\%.
This shows that \textit{the choice of augmentation strategy can have a greater impact than architecture changes}. 
While prior work has mainly focused on architectural improvements, our findings suggest that shaping training data can be even more influential for final performance.

\section{Analysis of Augmented Training Data}
\label{sec:analysis}
We systematically analyze how data augmentation strategies reshape the training distribution and affect the performance of GR models. 
Specifically, we examine their effects on the target distribution and the joint input–target distribution, and how the reshaped training distributions contribute to model performance.

\begin{table}[t!]
    \centering
    \caption{
    Performance (w.r.t. NDCG@10) of GR models varies greatly across augmentation strategies, showing that model effectiveness is highly sensitive to the chosen strategy, with large gaps between best and worst cases.
    }
    \label{tab:prelim}
    \setlength\tabcolsep{4.1pt}
    \renewcommand{\arraystretch}{1.1}
    \scalebox{0.87}{
    \begin{tabular}{ll|ccccc}
        \toprule
        Model & Strategy & \textbf{Beauty} & \textbf{Toys} & \textbf{Sports} & \textbf{ML1M} & \textbf{ML20M}\\
        \midrule
        \multirow{4}{*}{SASRec} & Last-Target  & 0.0124 & 0.0121 & 0.0037 & 0.0136 & \underline{\smash{0.0628}} \\
        & Multi-Target  & \textbf{0.0372} & \textbf{0.0378} & \textbf{0.0162} & \textbf{0.1194} & \textbf{0.0995} \\
        & Slide-Window & \underline{\smash{0.0323}} & \underline{\smash{0.0354}} & \underline{\smash{0.0149}} & \underline{\smash{0.1022}} & 0.0526 \\
        \rowcolor{gray!10}
        \cellcolor{white} & $\Delta$ (Best/Worst) & 200.0\% & 212.4\% & 337.8\% & 777.9\% & 89.2\%\\
        \midrule
        \multirow{4}{*}{TIGER} & Last-Target  & 0.0213 & 0.0212 & 0.0150 & 0.0147 & \underline{\smash{0.0559}} \\
        & Multi-Target  & \underline{\smash{0.0319}} & \textbf{0.0303} & \textbf{0.0194} & \textbf{0.1299} & \textbf{0.1147} \\
        & Slide-Window & \textbf{0.0321} & \underline{\smash{0.0273}} & \underline{\smash{0.0171}} & \underline{\smash{0.1105}} & 0.0321 \\
        \rowcolor{gray!10}
        \cellcolor{white} & $\Delta$ (Best/Worst) & 50.7\% & 42.9\% & 29.3\% & 783.7\% & 257.3\%\\
        \bottomrule
    \end{tabular}}
\end{table}

\subsection{Analysis of Target Distribution}
We first analyze how data augmentation strategies shape the target distribution in the training data, which plays a crucial role in model training.
The proportion of times an item $y \in \mathcal{I}$ appears as a training target is
$p_\text{train}(y) = \frac{1}{|\mathcal{D}_\text{train}|} \sum_{(x',y')\in\mathcal{D}_\text{train}} \mathbbm{1}[y=y']$ where $\mathbbm{1}[\cdot]$ is the indicator function.



\smallsection{Empirical Observations.}
For effective generalization, the training target distribution should closely align with that at test time.
As observed in Section~\ref{sec:prelim:distribution}, each strategy induces a distinct target distribution.
We quantify these differences via KL divergence between the training and test target distributions, i.e., $\text{KL}(p_\text{train} \mid\mid p_\text{test}) = \sum_{y\in \mathcal{I}} p_\text{train}(y) \log\frac{p_\text{train}(y)}{p_\text{test}(y)}$.
As shown in Table~\ref{tab:kl_align_disc}, MT consistently yields the lowest KL divergence across datasets, while LT has the highest in most cases, indicating substantial misalignment.
This trend mirrors the performance results in Table~\ref{tab:prelim}, where MT outperforms the other two strategies.
Notably, on ML20M, LT not only uniquely outperforms SW but also achieves lower KL divergence.
This suggests that closer alignment between training and test target distributions correlates with better performance.


\begin{table*}[t!]
    \centering
    \caption{
        Training distributions (specifically, target distributions and input-target distributions) can affect model performance. 
        We report KL divergence (KL) between training and test target distributions, alignment (A), discrimination (D) of input-target distributions in both sets, and their ratio (A/D) across datasets and strategies. 
        For readability, alignment (A) and discrimination (D) values are scaled by a factor of 100.
        The most desirable value is in \textbf{bold}, and the second-most desirable value is \underline{\smash{underlined}}.
    }
    \label{tab:kl_align_disc}
    \setlength\tabcolsep{2.8pt}
    \renewcommand{\arraystretch}{1.1}
    \scalebox{0.85}{
    \begin{tabular}{l|>{\columncolor{gray!10}}c|cc>{\columncolor{gray!10}}c|>{\columncolor{gray!10}}c|cc>{\columncolor{gray!10}}c|>{\columncolor{gray!10}}c|cc>{\columncolor{gray!10}}c|>{\columncolor{gray!10}}c|cc>{\columncolor{gray!10}}c|>{\columncolor{gray!10}}c|cc>{\columncolor{gray!10}}c}
        \toprule
        & \multicolumn{4}{c|}{\textbf{Beauty}} 
        & \multicolumn{4}{c|}{\textbf{Toys}} 
        & \multicolumn{4}{c|}{\textbf{Sports}} 
        & \multicolumn{4}{c|}{\textbf{ML1M}} 
        & \multicolumn{4}{c}{\textbf{ML20M}} \\
        \cmidrule(lr){2-5}\cmidrule(lr){6-9}\cmidrule(lr){10-13}\cmidrule(lr){14-17}\cmidrule(lr){18-21}
        \textbf{Strategy} & KL {\footnotesize($\downarrow$)} & A {\footnotesize($\uparrow$)} & D {\footnotesize($\downarrow$)} & A/D {\footnotesize($\uparrow$)} & KL {\footnotesize($\downarrow$)} & A {\footnotesize($\uparrow$)} & D {\footnotesize($\downarrow$)} & A/D {\footnotesize($\uparrow$)} & KL {\footnotesize($\downarrow$)} & A {\footnotesize($\uparrow$)} & D {\footnotesize($\downarrow$)} & A/D {\footnotesize($\uparrow$)} & KL {\footnotesize($\downarrow$)} & A {\footnotesize($\uparrow$)} & D {\footnotesize($\downarrow$)} & A/D {\footnotesize($\uparrow$)} & KL {\footnotesize($\downarrow$)} & A {\footnotesize($\uparrow$)} & D {\footnotesize($\downarrow$)} & A/D {\footnotesize($\uparrow$)} \\
        \midrule
        Last-Target 
        & 2.768 & 0.669 & 0.046 & 14.54
        & 3.147 & 0.765 & 0.039 & 19.61
        & 2.737 & 0.307 & 0.036 & 8.52
        & 2.198 & 0.454 & 0.116 & \underline{\smash{3.91}}
        & 0.343 & 0.534 & 0.065 & \underline{\smash{8.21}} \\
        
        Multi-Target 
        & \textbf{0.898} & 0.788 & 0.048 & \textbf{16.42}
        & \textbf{1.062} & 0.845 & 0.042 & \textbf{20.11}
        & \textbf{0.819} & 0.373 & 0.041 & \textbf{9.09}
        & \textbf{0.495} & 0.450 & 0.104 & \textbf{4.32}
        & \textbf{0.168} & 0.447 & 0.035 & \textbf{12.77} \\
        
        Slide-Window 
        & \underline{1.158} & 0.654 & 0.044 & \underline{\smash{14.86}}
        & \underline{1.349} & 0.716 & 0.036 & \underline{\smash{19.88}}
        & \underline{0.910} & 0.334 & 0.038 & \underline{\smash{8.78}}
        & \underline{0.563} & 0.632 & 0.177 & 3.57
        & \underline{\smash{0.312}} & 0.339 & 0.046 & 7.37 \\
        \bottomrule
    \end{tabular}}
\end{table*}

\smallsection{Theoretical Analysis.}
We next  quantify how the discrepancy between training and test target distributions depends on the underlying data generative process and the augmentation strategy in a controlled setting.
We start by assuming that user sequences are sampled independently from a ground-truth population distribution  $P_{\text{pop}}$, and all supported sequences have the length $n$. 
\begin{assumption}[I.I.D. Users]\label{assump:indep_users} 
(Sequence, target) tuples are sampled independently from a distribution $P_{\text{pop}}$, i.e. $(s^{(u)},i^{(u)}_{|s^{(u)}|+1}) \stackrel{\text{i.i.d.}}{\sim} P_{\text{pop}}$. Further, $P_{\text{pop}}(s^{(u)},i_{|s|+1}^{(u)}) > 0  \implies |s^{(u)}|=n$.
\end{assumption}
Furthermore, since in this section we are not concerned with the relationship between inputs and targets, we consider that all items are sampled independently such that the only factor that influences their distribution is their position in the sequence.
\begin{assumption}[Independent Items]\label{assump:indep_items} 
User sequences and targets are constructed by independently sampling $n+1$ items, that is: 
for all  $(s^{(u)},i^{(u)}_{n+1})$ s.t. $|s^{(u)}|=n$,
 $P_{\text{pop}}(s^{(u)}, i^{(u)}_{n+1}) = \prod_{k=1}^{n+1} p_{k}(i^{(u)}_{k})$, where $p_k$ is a distribution over $\mathcal{I}$ for all $k \in \{1,\dots, n+1\}$. 
\end{assumption}
Note that $p_{k}$ is the item distribution at position $k$ in the sequences. We can interpret the relationship between the $p_{k}$'s as dictating the recency bias in the data. 
{The greater the distance between $p_k$ and $p_{n+1}$ for smaller $k$, the stronger the recency bias.}
Here we use Total Variation (TV)
to measure the distance between distributions:
    $\text{TV}(p,q) := \frac{1}{2}\sum_{y\in \mathcal{I}} |p(y) - q(y)|$.
Moreover, we denote the empirical training target distributions induced by the LT, MT, and SW augmentations as $p_{\text{train}}^{\text{LT}}$, $p_{\text{train}}^{\text{MT}}$, and $p_{\text{train}}^{\text{SW}}$, respectively. We are interested in the distances between these empirical training target distributions and the ground-truth test target distribution $p_{n+1}$, which the following result bounds (see Appendix~\ref{sec:theory} and \cite{online2025appendix}).
\begin{theorem} \label{thm:targets}
    
        Suppose Assumptions \ref{assump:indep_users} and \ref{assump:indep_items} hold. Denote $\delta_k := \text{TV}(p_k, p_{n+1})$.  Then, with probability at least $0.99$,
    {\small\begin{align*}
        TV\left(p_{\text{train}}^{\text{LT}}, p_{n+1}\right) &= 
        O\left({ \delta_n + |\mathcal{I}|\sqrt{\frac{\log(|\mathcal{I}|)}{|\mathcal{U}|} }}\right), \\
        TV\left(p_{\text{train}}^{\text{MT}}, p_{n+1}\right) &= 
        O\left({\frac{1}{n}\sum_{k=2}^n \delta_k + |\mathcal{I}|\sqrt{\frac{\log(|\mathcal{I}|)}{n|\mathcal{U}|} }}\right), \\
        TV\left(p_{\text{train}}^{\text{SW}}, p_{n+1}\right) &= 
        O\left({\frac{1}{n^2}\sum_{k=2}^n k \delta_k + |\mathcal{I}|\sqrt{\frac{\log(|\mathcal{I}|) }{n|\mathcal{U}|} }}\right).
    \end{align*}}
\end{theorem}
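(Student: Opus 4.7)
The plan is to decompose each TV distance into a bias term and a variance term via the triangle inequality, $TV(p_{\text{train}}, p_{n+1}) \leq TV(\mathbb{E}[p_{\text{train}}], p_{n+1}) + TV(p_{\text{train}}, \mathbb{E}[p_{\text{train}}])$, where the bias captures the population-level mismatch between each strategy's induced target distribution and the test distribution $p_{n+1}$, while the variance captures the concentration of the empirical target frequencies around their expectation. Each strategy's bound then arises from plugging its specific expected distribution and effective sample size into this decomposition.

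\textbf{Bias term.} By linearity of expectation together with Assumption~\ref{assump:indep_items}, the expected empirical target distributions are $\mathbb{E}[p_{\text{train}}^{\text{LT}}] = p_n$, $\mathbb{E}[p_{\text{train}}^{\text{MT}}] = \frac{1}{n-1}\sum_{k=2}^n p_k$, and $\mathbb{E}[p_{\text{train}}^{\text{SW}}] = \frac{2}{n(n-1)}\sum_{k=2}^n (k-1)\, p_k$, because LT uses only position $n$ as a target, MT uses each $k \in \{2,\dots,n\}$ once per user, and SW uses position $k$ with multiplicity $k-1$ per user. Applying convexity of TV to these mixtures, $TV(\sum_k \alpha_k p_k, p_{n+1}) \leq \sum_k \alpha_k \delta_k$ whenever $\alpha_k \geq 0$ and $\sum_k \alpha_k = 1$, yields bias contributions of $\delta_n$, $\frac{1}{n-1}\sum_{k=2}^n \delta_k$, and $\frac{2}{n(n-1)}\sum_{k=2}^n (k-1)\delta_k$, matching the leading bias expressions in the theorem up to absolute constants.

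\textbf{Variance term.} For each item $y \in \mathcal{I}$, Assumptions~\ref{assump:indep_users} and~\ref{assump:indep_items} together imply that the indicators $X^{(u)}_{k,y} := \mathbbm{1}[i^{(u)}_k = y]$ are mutually independent across all $(u,k)$. I would then apply Hoeffding's inequality separately in each case. For LT, $p_{\text{train}}^{\text{LT}}(y)$ is an unweighted average of $|\mathcal{U}|$ such indicators, giving per-item deviation $O(\sqrt{\log(|\mathcal{I}|)/|\mathcal{U}|})$ with high probability; for MT, it is an unweighted average of $(n-1)|\mathcal{U}|$ indicators, giving $O(\sqrt{\log(|\mathcal{I}|)/(n|\mathcal{U}|)})$. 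A union bound over the $|\mathcal{I}|$ items (to keep the overall failure probability below $0.01$) and summing pointwise deviations to recover $TV$ introduces the leading $|\mathcal{I}|$ factor in each variance bound.

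\textbf{Main obstacle.} The most delicate case is SW, where the $n(n-1)/2$ training pairs per user are not independent but merely reuse the $n-1$ independent item draws, with position $k$ contributing weight $k-1$. Naively equating the effective sample size with the number of pairs would yield an optimistic $O(n^2|\mathcal{U}|)$. The correct analysis uses weighted Hoeffding on $W_{u,k} := (k-1)\mathbbm{1}[i^{(u)}_k = y]$: with weight-squared sum $\sum_{u,k}(k-1)^2 = \Theta(|\mathcal{U}|n^3)$ and normalization $N_{\text{SW}}^2 = \Theta(|\mathcal{U}|^2 n^4)$, the effective sample size is $\Theta(n|\mathcal{U}|)$, producing the stated $O(|\mathcal{I}|\sqrt{\log(|\mathcal{I}|)/(n|\mathcal{U}|)})$ variance bound (the same as MT, despite SW generating many more pairs). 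Once this weighted concentration is properly set up, combining bias and variance via the triangle inequality and a final union bound closes the argument.
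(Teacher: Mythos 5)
Your proposal is correct and follows essentially the same route as the paper's proof: a triangle-inequality split into a mixture-bias term (bounded via convexity of TV by $\sum_k \omega_k \delta_k$) and a concentration term handled by weighted Hoeffding on the per-position indicators, followed by a union bound over $\mathcal{I}$ that yields the $|\mathcal{I}|\sqrt{\log|\mathcal{I}|}$ factor. In particular, your key observation for SW — that the $\Theta(n^2)$ pairs per user reuse only $n-1$ independent draws with weights $k-1$, so the effective sample size is $\Theta(n|\mathcal{U}|)$ rather than $\Theta(n^2|\mathcal{U}|)$ — is exactly how the paper's generalized Theorem~\ref{thm:targets-gen} (with $\sum_k \omega_{\beta,k}^2$ in the Hoeffding exponent) obtains the same variance rate for SW as for MT.
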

The bounds in Theorem \ref{thm:targets} consist of two terms: (1) a distribution bias that scales with the $\delta_k$'s, and (2) a sampling variance that diminishes with more target samples. LT relies only on the last item per user for training targets, so its bias scales only with $\delta_n$ and its variance diminishes only with the number of users.
On the other hand, MT and SW use within-sequence targets, so their bias grows with all $\delta_k\in \{2,\dots, n\}$ and their variance diminishes with the number of users {\em and} the sequence length. SW up-weights $\delta_k$'s with larger $k$ since it tends to sample later targets more often.  

These bounds suggest how to select  the  augmentation strategy depending on the underlying data distributions. If the data has high recency bias, i.e. $\delta_n \ll \dots \ll \delta_2$, and the bias term dominates the variance term, we should use LT. Conversely, if the data has low bias (small $\delta_k$'s) and the variance dominates, MT and SW induce a training target distribution closer to the test. 
This matches intuition: we should only train on sampled subsequences if the in-sequence targets are similarly distributed as the test targets.
In Appendix \ref{sec:theory}, we present a more general upper bound that captures a continuous range of augmentation strategies. We leave lower bounds (for which stronger assumptions are needed) to future work.

    


%



\subsection{Analysis of Input-Target Distribution}
We now analyze how each strategy shapes the distribution of input–target pairs in the training set and how this influences model performance.
Intuitively, given an unseen test input $x'$ with ground-truth target $y'$, the model is more likely to make accurate predictions if the training set contains a \textit{positive} input $x^+$ that is structurally \textit{similar} to $x'$ and shares the same target, i.e., $(x^+, y^+) \in \mathcal{D}_\text{train}$ with $y^+ = y'$.
Conversely, it is beneficial for the training set to also include \textit{negative} inputs $x^-$ that are structurally \textit{dissimilar} to $x'$ and have different targets, i.e., $(x^-, y^-) \in \mathcal{D}_\text{train}$ with $y^- \neq y'$.

\smallsection{Alignment and Discrimination.}
To quantify how well the training distribution supports generalization to unseen inputs, we introduce two measures.
Let $\mathcal{C}_y$ denote the set of inputs in the training set that are associated with target $y$, i.e., $\mathcal{C}_y = \{x \mid (x,y)\in \mathcal{D}_\text{train}\}$.

\textbf{Alignment} measures the expected similarity between an unseen input $x'$ and positive training inputs that share its target $y'$, averaged over input–target pairs from the test set:
\begin{equation}\label{eq:alignment}
    \mathbbm{E}_{(x',y')\sim\mathcal{D}_\text{test}} \biggr[ \frac{1}{|\mathcal{C}_{y'}|}\sum_{x^+ \in\mathcal{C}_{y'}} \textbf{sim}(x^+,x') \biggr],
\end{equation}
where $\textbf{sim}(\cdot,\cdot)$ denotes the similarity between two item sequences.~\footnote{We compute the similarity between two sequences as 1 minus the normalized edit (Levenshtein) distance~\cite{lcvenshtcin1966binary,yujian2007normalized}. See \cite{online2025appendix} for the details.}
Intuitively, a higher alignment score indicates that the training set contains inputs structurally similar to the test input with the correct target, increasing the likelihood of accurate predictions.

\textbf{Discrimination} measures the expected similarity between an unseen input $x'$ and negative training inputs with targets $y^- \neq y'$, averaged over test input–target pairs:
\begin{equation}\label{eq:discrimination}
    \mathbbm{E}_{(x',y')\sim\mathcal{D}_\text{test}} \biggr[ \mathbbm{E}_{y^-\in \mathcal{Y}\setminus \{y'\}} \biggr[\frac{1}{|\mathcal{C}_{y^-}|}\sum_{x^- \in\mathcal{C}_{y^-}}  \textbf{sim}(x^-,x') \biggr]\biggr].
\end{equation}
Intuitively, a lower discrimination score indicates that negative training inputs are less similar to the test input, reducing the chance of the model confusing it with {inputs associated with other targets.}

\smallsection{Empirical Observations.}
In Table~\ref{tab:kl_align_disc}, we report the alignment and discrimination scores of the training distributions induced by each strategy across datasets.
To assess their joint effect, we use the alignment-to-discrimination ratio, where a higher ratio indicates that the training set provides greater exposure to structurally similar positives while limiting confusing negatives.
We observe that MT consistently achieves the highest ratio, corresponding to its superior performance in Table~\ref{tab:prelim}.
This indicates that a good trade-off between alignment and discrimination is potentially important for improving generalization in GR models.

\section{\method: A Generalized and Principled Framework for Data Augmentation}
\label{sec:unified}
Despite substantial progress in GR, there remains a lack of clarity and standardization in constructing training data from user interaction histories. 
Many existing works adopt different strategies discussed in Section~\ref{sec:prelim:strategies}, which leads to inconsistent training distributions. 
However, none of these strategies can be assumed to be best-suited for training GR model.
Thus, evaluations are often conducted under inconsistent or restrictive training distributions, making it difficult to accurately assess and compare the effectiveness of models.

This motivates the need for a \textit{generalized} and \textit{principled} framework for constructing training data for GR.
We present \method, (\textbf{\underline{\smash{Gen}}}eralized and \textbf{\underline{\smash{P}}}rincipled \textbf{\underline{\smash{A}}}ugmentation for \textbf{\underline{\smash{S}}}equences) which unifies existing strategies into a three-step sampling process. 
Further, this framework offers an efficient tuning strategy for more intentionally choosing the augmentation strategy.

\subsection{Principled Sampling Formulation}\label{sec:analysis:unified}
We find that input augmentations that sample non-contiguous subsequences (e.g., item insertion or deletion) do not significantly improve performance when combined with strategies that produce new targets beyond the last item in the sequence (see Table~\ref{tab:input_aug}). Therefore, 
following the strategies in Section~\ref{sec:prelim:strategies}, we consider that the input is a contiguous subsequence of a user's interaction history, and the target is the single item that immediately follows this subsequence.  
Let $\mathcal{X}$ denote the space of all such contiguous input subsequences,~\footnote{Formally, $\mathcal{X}=\{[i_p^{(u)},\dots,i_q^{(u)}]\mid u\in\mathcal{U},1\leq p \leq q \leq |s^{(u)}|\}$.} and let $\mathcal{Y}=\mathcal{I}$ denote the set of all items.

We formalize data augmentation in GR as a stochastic process that samples input-target pairs $(\tilde{x},\tilde{y})\in\mathcal{X}\times \mathcal{Y}$ from user interaction histories to construct the training set $\mathcal{D}_\text{train}$.
A specific augmentation strategy is characterized by a sampling distribution $p\left(\tilde{x},\tilde{y}\right)$, which defines the probability of selecting each input-target pair and thus determines the training distribution.

Building on this formulation, we introduce \method, a unified framework that generalizes the previously discussed augmentation strategies as a composition of three probabilistic steps:

\begin{enumerate}[leftmargin=*]
    \item \textbf{Sequence Sampling.}
    Sample a user $u \in \mathcal{U}$ with probability:
    \begin{equation}\label{eq:seq_sampling}
    p_\alpha(u) = \frac{(|s^{(u)}| - 1)^{\alpha}}{\sum_{u' \in \mathcal{U}} (|s^{(u')}| - 1)^{\alpha}},
    \end{equation}
    where $|s^{(u)}|$ is the length of $u$’s interaction sequence, and $|s^{(u)}| - 1$ corresponds to the number of valid target positions.
    The exponent $\alpha$ controls the bias toward users with different sequence lengths: $\alpha > 0$ favors longer sequences, $\alpha = 0$ samples all users uniformly, and $\alpha < 0$ favors shorter sequences.

    \item \textbf{Target Sampling.}  
    Given user $u$, select a target position $k \in \{2, \dots, |s^{(u)}|\}$ with probability:  
    \begin{equation}\label{eq:target_sampling}
        p_\beta(k \mid u) = \frac{(k - 1)^{\beta}}{\sum_{k'=2}^{|s^{(u)}|} (k' - 1)^{\beta}},
    \end{equation}
    and define the target item as $\tilde{y} = i_{k}^{(u)}$.
    The exponent $\beta$ controls the bias toward target recency: $\beta >0$ favors more recent targets, $\beta=0$ samples uniformly, and $\beta<0$ favors earlier targets.

    \item \textbf{Input Sampling.}  
    Given user $u$ and the chosen target position $k$, sample a start position $j \in \{1, \dots, k-1\}$ with probability:
    \begin{equation}\label{eq:input_sampling}
        p_\gamma(j \mid k, u) = \frac{j^{\gamma}}{\sum_{j'=1}^{k-1} (j')^{\gamma}},
    \end{equation}
    and define the input sequence as $\tilde{x} = [i_{j}^{(u)}, \dots, i_{k-1}^{(u)}]$.
    The exponent $\gamma$ controls the bias toward input sequence length (and thus how far back the context starts): $\gamma >0$ favors shorter contexts, $\gamma=0$ samples uniformly, and $\gamma<0$ favors longer contexts.
\end{enumerate}
\vspace{1mm}

\noindent
Given $\tilde{y} = i_{k}^{(u)}$ and $\tilde{x} = [i_{j}^{(u)}, \dots, i_{k-1}^{(u)}]$, each input–target pair $(\tilde{x}, \tilde{y}) \in \mathcal{X} \times \mathcal{Y}$ is uniquely determined by a tuple $(u, k, j)$,
where $u \in \mathcal{U}$ is the user, $k \in \{2, \dots, |s^{(u)}|\}$ is the target position, and $j \in \{1, \dots, k-1\}$ is the input start position.
The joint distribution over $(\tilde{x}, \tilde{y})$ therefore factorizes as:
\begin{equation*}
p(\tilde{x}, \tilde{y})
= p_\alpha(u) \cdot p_\beta(k \mid u) \cdot p_\gamma(j \mid k, u),
\end{equation*}
where $p_\alpha(u)$, $p_\beta(k \mid u)$, and $p_\gamma(j \mid k, u)$ are defined in Eqs.~\eqref{eq:seq_sampling} - \eqref{eq:input_sampling}.

\subsection{Recasting Existing Strategies}\label{sec:method:recast}
By appropriately setting $(\alpha, \beta, \gamma)$ in Eqs.~\eqref{eq:seq_sampling} – \eqref{eq:input_sampling}, we can recover common augmentation strategies as special cases of \method.
Table~\ref{tab:alpha_beta_gamma} summarizes the corresponding exponent choices:

\begin{itemize}[leftmargin=*]
    \item \textbf{Last-Target} ($\alpha = 0.0$, $\beta = \infty$, $\gamma = -\infty$):
    All users are sampled uniformly regardless of sequence length.
    For each user, the last item is always selected as the target, and the input sequence is fixed to the full prefix immediately preceding the target.
    \item \textbf{Multi-Target} ($\alpha = 1.0$, $\beta = 0.0$, $\gamma = -\infty$):
    Users are sampled in proportion to their sequence length.
    All valid target positions within a user’s sequence are selected uniformly, with the input again fixed to the full prefix before the target.
    \item \textbf{Slide-Window} ($\alpha = 2.0$, $\beta = 1.0$, $\gamma = 0.0$):
    Users with longer sequences are more likely to be sampled.
    Targets are chosen with a linear preference toward later positions, and the input start position is selected uniformly among the valid preceding indices, effectively sliding a variable-length window along the sequence.
\end{itemize}

\begin{table}[t!]
    \centering
    \caption{
        Common augmentation strategies (Section~\ref{sec:prelim:strategies}) can be viewed as special cases of $(\alpha,\beta,\gamma)$ in \method.
    }
    \vspace{-2pt}
    \label{tab:alpha_beta_gamma}
    \begin{minipage}{0.40\linewidth}
        \centering
        \setlength\tabcolsep{4.6pt}
        \renewcommand{\arraystretch}{1.1}
        \scalebox{0.87}{
        \begin{tabular}{l|ccc}
            \toprule
            Strategy & $\alpha$ & $\beta$ & $\gamma$\\
            \midrule
            Last-Target  & 0.0 & $\infty$ & $-\infty$ \\
            Multi-Target & 1.0 & 0.0 & $-\infty$\\
            Slide-Window & 2.0 & 1.0 & 0.0\\
            \bottomrule
        \end{tabular}}
    \end{minipage}
    \hfill
    \begin{minipage}{0.54\linewidth}
        \centering
        \includegraphics[width=\linewidth]{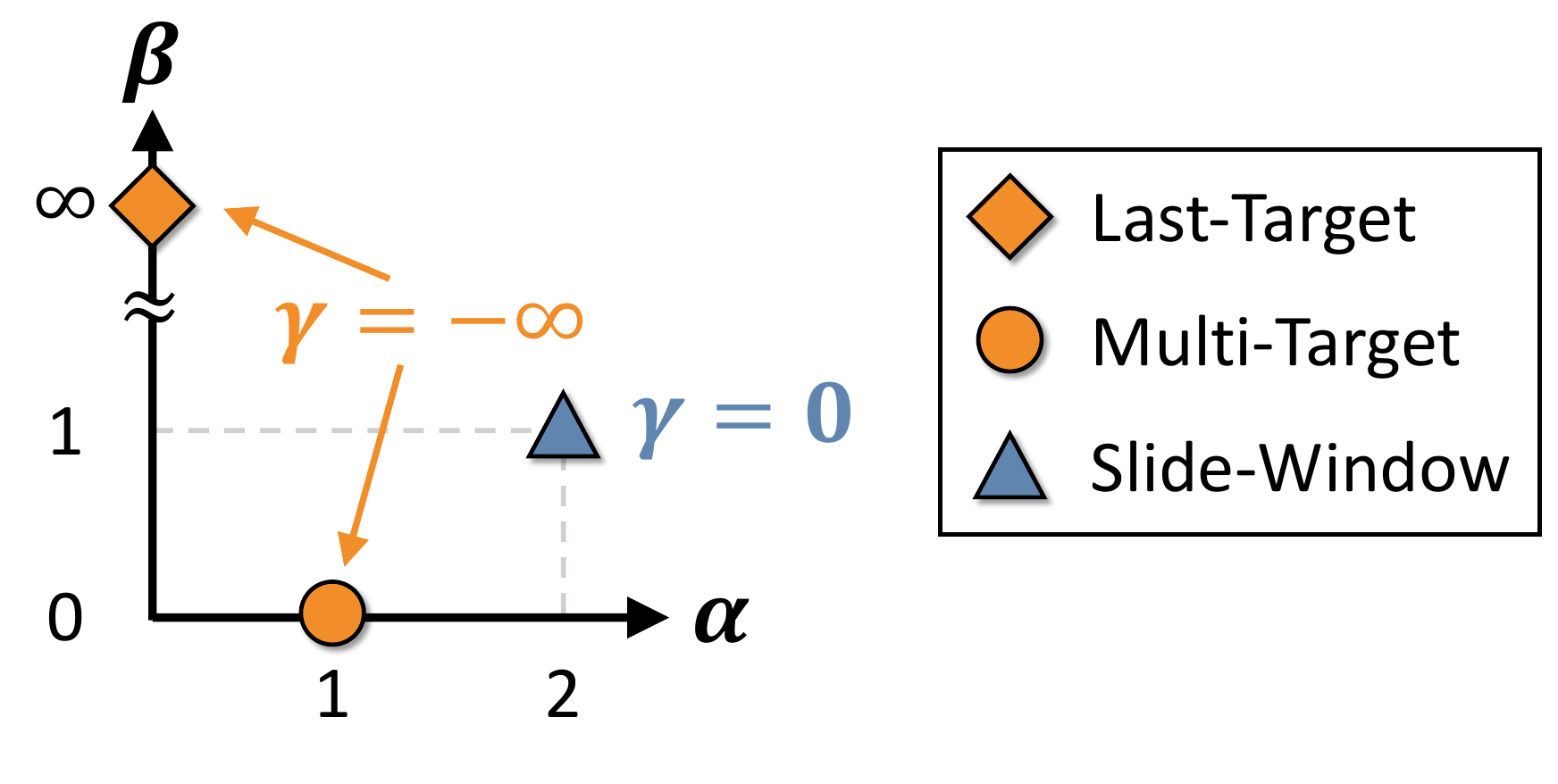}
    \end{minipage}
\end{table}

\noindent 
Such a parameterization enables systematic exploration of the training distribution space and facilitates principled selection of augmentation strategies beyond ad-hoc choices. 
See \cite{online2025appendix} for details.

\subsection{Reducing the Parameter Search Space}\label{sec:method:parameter_search}
While the hyperparameters $(\alpha,\beta,\gamma)$ can take infinitely many configurations, in practice, we reduce the search to a small subset that can still deliver effective performance 
under computational constraints.


To this end, we build on the empirical analysis in Section~\ref{sec:analysis}, where we observed that model performance is strongly linked to (O1) the alignment between the training and test target distributions, and (O2) the trade-off between alignment and discrimination in the input–target distribution, both of which depend upon the choice of $(\alpha, \beta, \gamma)$.  
Based on these insights, given a large set $\mathcal{S}$ of $(\alpha,\beta,\gamma)$ configurations, we design a two-step filtering procedure:  

\begin{enumerate}[leftmargin=*]
    \item \textbf{Target alignment filtering.}  
    Select the top $r$\% of configurations in $\mathcal{S}$ that induce the lowest KL divergence between the training and validation target distributions, motivated by (O1). 
    
    \item \textbf{Input–target trade-off filtering.}  
    From this subset, rank each configuration by $\max(\text{rank}_A,\text{rank}_D)$,  
    where $\text{rank}_A$ is the rank of alignment  (Eq.~\eqref{eq:alignment}; higher is better) and $\text{rank}_D$ is the rank of discrimination (Eq.~\eqref{eq:discrimination}; lower is better) computed using validation inputs and targets, or a sample thereof for faster processing.  
    Motivated by (O2), we take the maximum {of the two ranks} to reward configurations with strong alignment {\em and} discrimination.
    Then, we select the top-$k$ configurations for final training. 
\end{enumerate}
This procedure efficiently narrows the search space from hundreds to just a few configurations without training.
It is orthogonal to existing hyperparameter tuning methods~\cite{akiba2019optuna,bergstra2013making}, leveraging augmentation-specific signals (target KL divergence, alignment, and discrimination) that could also be integrated into those methods.






\section{Experimental Results}
\label{sec:exp}

We conduct experiments to answer the following key questions:

\begin{itemize}[leftmargin=*]
    \item \textbf{Q1.} How does the overall performance of \method compare to that of other training sequence selection strategies for GR? 
    \item \textbf{Q2.} What efficiency benefits does \method provide?
    \item \textbf{Q3.} How well does \method generalize across data and models?
\end{itemize}




\subsection{Experimental Settings}
We describe the settings used for the experiments. 
See \cite{online2025appendix} for details.

\smallsection{Datasets.}
We evaluate our method on five public benchmarks and one industrial dataset.
The public datasets include \textbf{Beauty}, \textbf{Toys}, and \textbf{Sports}, which are 5-core-filtered subsets of the Amazon review dataset~\cite{mcauley2015image}.
\textbf{ML1M} and \textbf{ML20M} are widely used movie rating datasets from MovieLens \cite{harper2015movielens}.
\textbf{Internal} is a large-scale user sequence modeling dataset. 
User histories for training are collected over a 180-day period, and validation and test targets are the first events occurring during the subsequent week. We deduplicate consecutive events and remove users with fewer than 5 training events or fewer than 2 target events.
 Table~\ref{tab:data} summarizes dataset statistics.

\begin{table}[t!]
    \centering
    \caption{The statistics of the datasets.}
    \label{tab:data}
    \setlength\tabcolsep{2.7pt}
    \renewcommand{\arraystretch}{1.1}
    \scalebox{0.875}{
    \begin{tabular}{l|rrrrrr}
        \toprule
        \textbf{Dataset} & \textbf{Beauty} & \textbf{Toys} & \textbf{Sports} & \textbf{ML1M} & \textbf{ML20M} & \textbf{Internal} \\
        \midrule
        \# Users & 22,363 & 19,412 & 35,598 & 6,040 & 138,493 & 69.38M \\
        \# Items & 12,101 & 11,924 & 18,357 & 3,416 & 26,744 & 42,805\\
        \# Interactions & 198,502 & 167,597 & 296,337 & 999,611 & 20,000,263 & 1.527B \\
        \# Avg. Length & 8.88 & 8.63 & 8.32 & 165.50 & 144.41 & 22.01\\
        Sparsity & 99.93\% & 99.93\% & 99.95\% & 95.16\% & 99.46\% & 99.94\% \\
        \bottomrule
    \end{tabular}}
\end{table}

\begin{table*}[t!]
\centering
\caption{
\method consistently outperforms common augmentation strategies when applied to both SASRec~\cite{kang2018self} and TIGER~\cite{rajput2023recommender}, across datasets and metrics (NDCG@10 \& Recall@10).
The best performance is in \textbf{bold}, and the second-best one is \underline{\smash{underlined}}.
}
\label{tab:sasrec}
\setlength\tabcolsep{2.7pt}
\renewcommand{\arraystretch}{1.23}
\scalebox{0.80}{
\begin{tabular}{ll|cc|cc|cc|cc|cc}
\toprule
&
& \multicolumn{2}{c|}{\textbf{Beauty}} 
& \multicolumn{2}{c|}{\textbf{Toys}} 
& \multicolumn{2}{c|}{\textbf{Sports}} 
& \multicolumn{2}{c|}{\textbf{ML1M}} 
& \multicolumn{2}{c}{\textbf{ML20M}} \\
&& NDCG@10 & Recall@10 & NDCG@10 & Recall@10 & NDCG@10 & Recall@10 & NDCG@10 & Recall@10 & NDCG@10 & Recall@10 \\
\midrule

\multirow{5}{*}{SASRec} &

Last-Target
& 0.0124 \scriptsize$\pm$0.0005 & 0.0237 \scriptsize$\pm$0.0013 
& 0.0121 \scriptsize$\pm$0.0003 & 0.0237 \scriptsize$\pm$0.0008 
& 0.0037 \scriptsize$\pm$0.0003 & 0.0073 \scriptsize$\pm$0.0005 
& 0.0136 \scriptsize$\pm$0.0009 & 0.0306 \scriptsize$\pm$0.0016 
& 0.0628 \scriptsize$\pm$0.0010 & 0.1142 \scriptsize$\pm$0.0011 \\

& Multi-Target
& \underline{\smash{0.0372 \scriptsize$\pm$0.0006}} & \underline{\smash{0.0623 \scriptsize$\pm$0.0008}} 
& \underline{\smash{0.0378 \scriptsize$\pm$0.0014}} & \underline{\smash{0.0636 \scriptsize$\pm$0.0023}} 
& \underline{\smash{0.0162 \scriptsize$\pm$0.0006}} & \underline{\smash{0.0282 \scriptsize$\pm$0.0008}} 
& \underline{\smash{0.1194 \scriptsize$\pm$0.0046}} & \underline{\smash{0.2236 \scriptsize$\pm$0.0066}} 
& \underline{\smash{0.0995 \scriptsize$\pm$0.0015}} & \underline{\smash{0.1824 \scriptsize$\pm$0.0028}} \\

& Slide-Window
& 0.0323 \scriptsize$\pm$0.0004 & 0.0510 \scriptsize$\pm$0.0009 
& 0.0354 \scriptsize$\pm$0.0003 & 0.0571 \scriptsize$\pm$0.0006 
& 0.0149 \scriptsize$\pm$0.0001 & 0.0256 \scriptsize$\pm$0.0007 
& 0.1022 \scriptsize$\pm$0.0062 & 0.1960 \scriptsize$\pm$0.0082 
& 0.0526 \scriptsize$\pm$0.0024 & 0.1076 \scriptsize$\pm$0.0050 \\
\rowcolor{gray!10}
\cellcolor{white} & {\textbf{\method}}
& \textbf{0.0426 \scriptsize$\pm$0.0003} & \textbf{0.0689 \scriptsize$\pm$0.0007}
& \textbf{0.0481 \scriptsize$\pm$0.0007} & \textbf{0.0771 \scriptsize$\pm$0.0016}
& \textbf{0.0219 \scriptsize$\pm$0.0003} & \textbf{0.0365 \scriptsize$\pm$0.0003}
& \textbf{0.1230 \scriptsize$\pm$0.0029} & \textbf{0.2288 \scriptsize$\pm$0.0053}
& \textbf{0.1115 \scriptsize$\pm$0.0015} & \textbf{0.1938 \scriptsize$\pm$0.0024} \\
\rowcolor{gray!10}
\cellcolor{white} & {\textbf{Improv.}} & \textbf{14.52\%} & \textbf{10.59\%} & \textbf{27.25\%} & \textbf{21.23\%} & \textbf{35.19\%} & \textbf{29.43\%} & \textbf{3.02\%} & \textbf{2.33\%} & \textbf{12.06\%} & \textbf{6.25\%}\\

\bottomrule

\multirow{5}{*}{TIGER} &

Last-Target
& 0.0213 \scriptsize$\pm$0.0020 & 0.0431 \scriptsize$\pm$0.0035 
& 0.0212 \scriptsize$\pm$0.0002 & 0.0413 \scriptsize$\pm$0.0011 
& 0.0150 \scriptsize$\pm$0.0003 & 0.0281 \scriptsize$\pm$0.0009 
& 0.0147 \scriptsize$\pm$0.0012 & 0.0340 \scriptsize$\pm$0.0028 
& 0.0559 \scriptsize$\pm$0.0019 & 0.1063 \scriptsize$\pm$0.0028 \\

& Multi-Target
& 0.0319 \scriptsize$\pm$0.0003 & \underline{\smash{0.0608 \scriptsize$\pm$0.0007}} 
& \underline{\smash{0.0303 \scriptsize$\pm$0.0011}} & \underline{\smash{0.0575 \scriptsize$\pm$0.0019}} 
& \underline{\smash{0.0194 \scriptsize$\pm$0.0001}} & \underline{\smash{0.0359 \scriptsize$\pm$0.0002}} 
& \underline{\smash{0.1273 \scriptsize$\pm$0.0024}} & \underline{\smash{0.2272 \scriptsize$\pm$0.0050}} 
& \underline{\smash{0.1147 \scriptsize$\pm$0.0059}} & 
\underline{\smash{0.1900 \scriptsize$\pm$0.0091}} \\

& Slide-Window
& \underline{\smash{0.0321 \scriptsize$\pm$0.0014}} & 0.0580 \scriptsize$\pm$0.0009 
& 0.0273 \scriptsize$\pm$0.0007 & 0.0504 \scriptsize$\pm$0.0008 
& 0.0171 \scriptsize$\pm$0.0004 & 0.0319 \scriptsize$\pm$0.0010 
& 0.1105 \scriptsize$\pm$0.0038 & 0.1966 \scriptsize$\pm$0.0049 
& 0.0321 \scriptsize$\pm$0.0077 & 0.0646 \scriptsize$\pm$0.0147 \\

\rowcolor{gray!10}
\cellcolor{white} & {\textbf{\method}}
& \textbf{0.0443 \scriptsize$\pm$0.0010} & \textbf{0.0766 \scriptsize$\pm$0.0010}
& \textbf{0.0482 \scriptsize$\pm$0.0011} & \textbf{0.0822 \scriptsize$\pm$0.0034}
& {\textbf{0.0254 \scriptsize$\pm$0.0009}} & {\textbf{ 0.0453\scriptsize$\pm$0.0013}}
& {\textbf{0.1390 \scriptsize$\pm$0.0022}} & {\textbf{ 0.2425\scriptsize$\pm$0.0008}}
& \textbf{0.1233 \scriptsize$\pm$0.0003} & \textbf{0.2009 \scriptsize$\pm$0.0001} \\
\rowcolor{gray!10}
\cellcolor{white} & {\textbf{Improv.}} & \textbf{38.01\%} & \textbf{25.99\%} & \textbf{59.08\%} & \textbf{42.96\%} & \textbf{30.93\%} & \textbf{26.18\%} & \textbf{9.19\%} & \textbf{6.73\%} & \textbf{7.50\%} & \textbf{5.74\%}\\

\bottomrule
\end{tabular}
}
\end{table*}

\smallsection{Evaluation.}
We use NDCG@$K$ and Recall@$K$ to evaluate all methods, with $K \in \{5, 10\}$.
We follow the leave-one-out protocol to split the data into training, validation, and test sets~\cite{kang2018self,rajput2023recommender,xie2022contrastive,zhou2020s3}.
The last and second-to-last items are used for testing and validation, respectively, and the remainder form the training set.

\smallsection{Implementation Details.}
We implement SASRec~\cite{kang2018self} using a contrastive loss with full-batch negative samples. 
We use embedding dimension 128 and two transformer layers with one attention head per layer.
Following \cite{zhao2022recbole}, we compute the loss using one target per sequence in the batch. 
All sampling strategies produce the same number of subsequences per batch, each with a single training target.
For TIGER~\cite{rajput2023recommender}, we use the publicly-released implementation~\cite{ju2025generative}.
Semantic IDs are obtained by extracting 4096-dimensional text embeddings using Flan-T5-XXL~\cite{chung2024scaling}, followed by residual k-means clustering~\cite{deng2025onerec} with three layers and 256 codebooks per layer.
For the sequential model, we use the default T5-based encoder-decoder transformer.
We train an internal SASRec-style model for the \textbf{Internal} experiments.
We use the AdamW optimizer and tune the learning rate and weight decay separately for each model. 
We use batch sizes of 2096 for public data and 256 for \textbf{Internal} data, and early stop all experiments using validation Recall@5.

For computational efficiency, we implement \method by sampling subsequences in-batch. 
That is, starting from a set of $B$ raw sequences, we sample $B$ subsequences for all methods, where $B$ is the batch size.
The raw $(\alpha,\beta,\gamma)$ configuration set $\mathcal{S}$ (see Section~\ref{sec:method:parameter_search}) is given by $\alpha \in \{-2,-1,0,1,2\}$, $\beta \in \{-1,0,1,2,\infty\}$, and $\gamma \in \{-\infty,-1,0,1\}$, encompassing the three strategies described in Section~\ref{sec:prelim:strategies}. We set $r=20$ and $k=10$ for public data, and $k=5$ for \textbf{Internal}.
On \textbf{Internal}, we sample 150K users to estimate dataset statistics, and restrict $\gamma$ to $\{-\infty,0\}$, resulting in $(\alpha, \beta, \gamma)=(0,2,0)$.

\smallsection{Machines.}
We run the academic and internal experiments on 4 x 16GB T4 and 8 x 40GB A100 NVIDIA GPUs, respectively.

\vspace{-2mm}

\subsection{Q1.  Overall Performance}\label{sec:exp:effectiveness}
First and most importantly, we assess the \textit{effectiveness} of \method.

\smallsection{Overall Performance.}
In Table~\ref{tab:sasrec}, we compare \method with common data augmentation strategies across five public datasets.
\method consistently and significantly outperforms all baselines, with improvements over the best-performing baseline ranging from 2.33\% to 59.08\%.
These results indicate that commonly used strategies do not yield an optimal training distribution, whereas \method effectively reshapes it for better performance.
Additional results for $K$=5 are provided in \cite{online2025appendix}.

\smallsection{On Training Data Properties.}
We analyze how the augmented training data produced by \method, using the selected $(\alpha,\beta,\gamma)$ configurations, reshapes the training distribution.
As shown in Table~\ref{tab:alpha_beta_gamma}, these configurations generally yield improved target distributions and input–target distributions compared to MT-augmented data.
Notably, none of the common strategies (LT, MT, or SW) are selected as optimal, highlighting the importance of flexibly and appropriately shaping the training distribution for effective GR.

\smallsection{Comparison with Sequence-Level Augmentations.}
We compare \method with common sequence-level augmentation methods~\cite{zhou2024contrastive} (also see Appendix~\ref{app:input_aug} for details) under various target distributions from different $(\alpha,\beta)$ settings. 
As shown in Table~\ref{tab:input_aug}, \method outperforms these perturbation-based baselines (e.g., item insertion and reordering) when $\gamma$ is properly tuned, by sampling \textit{contiguous} inputs of varying length. 
While prior work often evaluates only at $(\alpha,\beta)=(0,\infty)$~\cite{zhou2024contrastive}, our results show performance varies with the target distribution, emphasizing the need for broader evaluation.

\begin{figure}[t]
    \centering
    \includegraphics[width=0.48\linewidth]{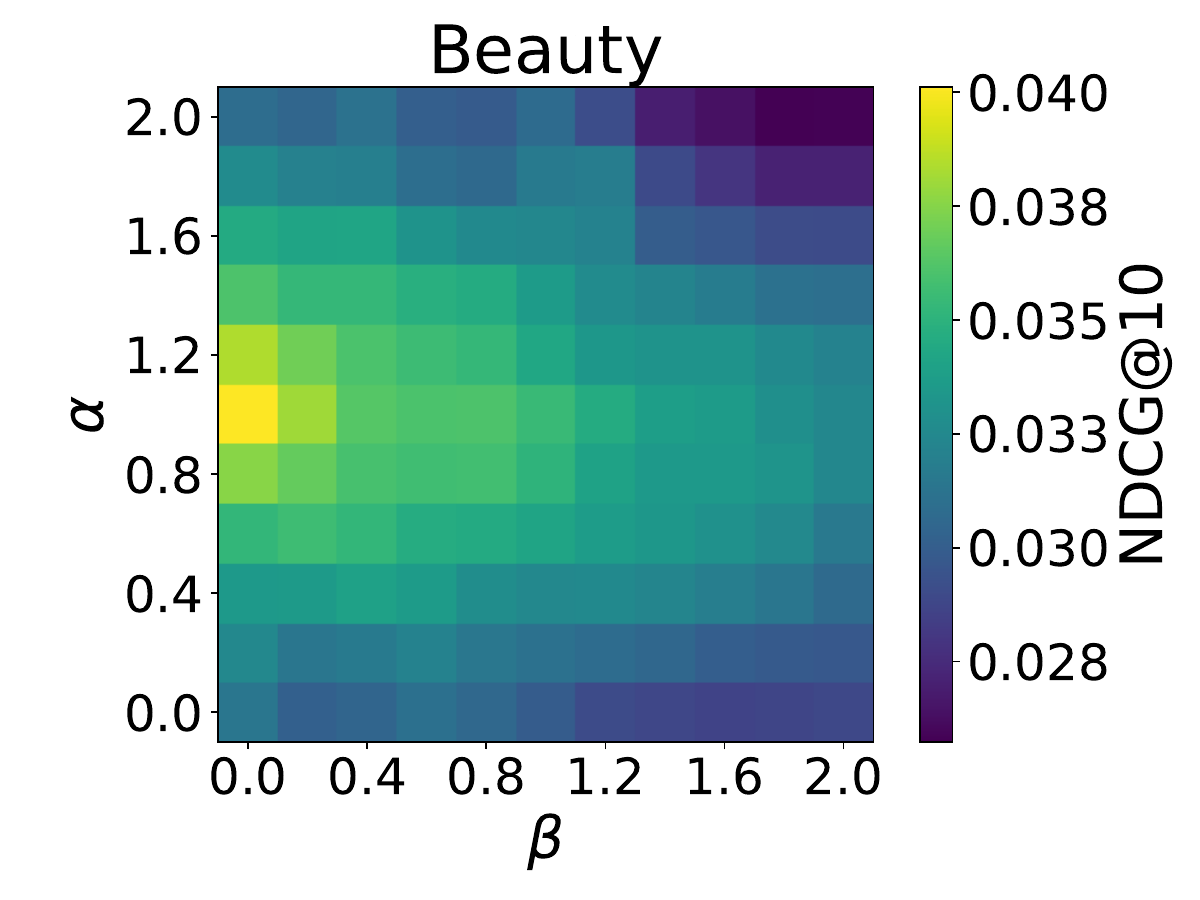}
    \hspace{5pt}
    \includegraphics[width=0.48\linewidth]{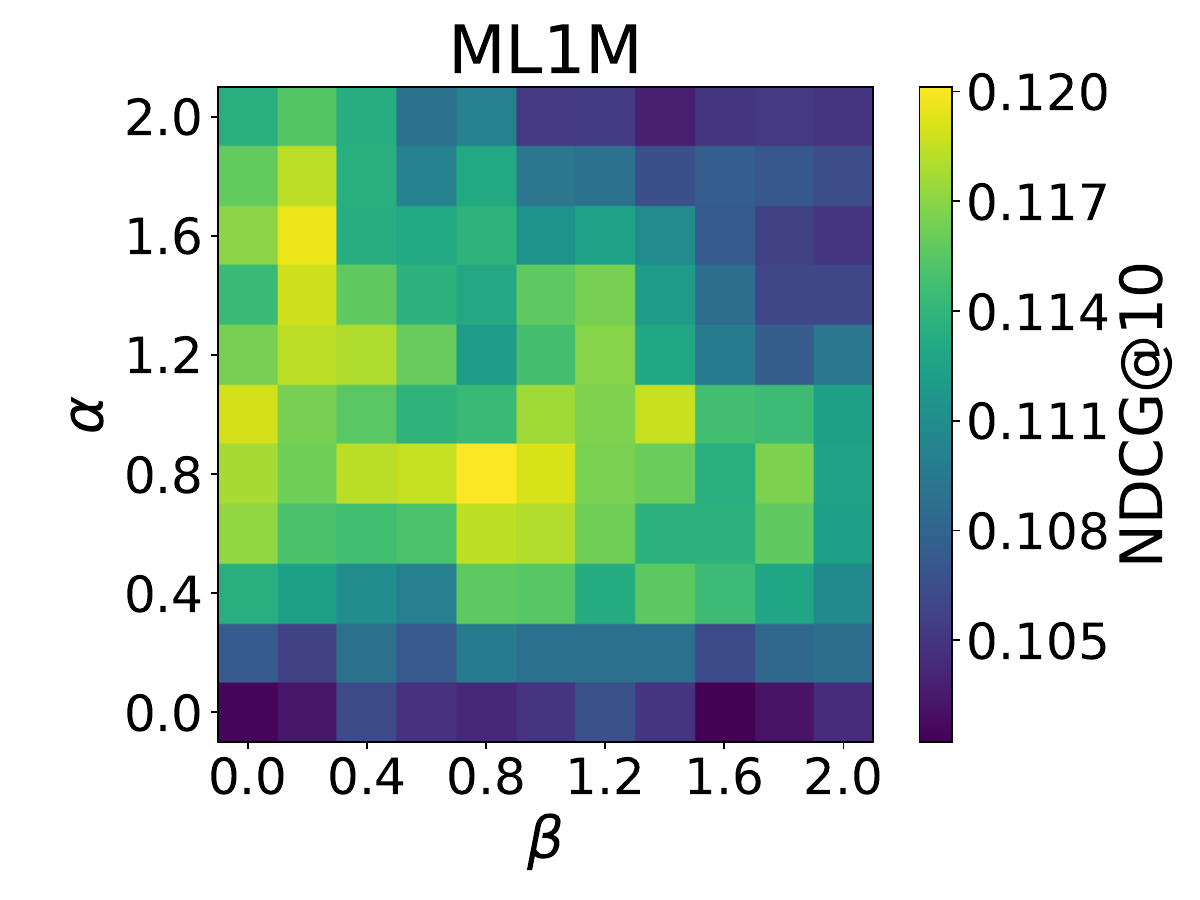}
    \caption{
    The two parameters, $\alpha$ and $\beta$, jointly shape the training distribution and have a substantial impact on model performance.
    Their impact patterns differ across datasets.
    \label{fig:alpha_beta}}
\end{figure}

\smallsection{Effectiveness of Components.}
We examine the impact of \method’s key parameters. Figure~\ref{fig:alpha_beta} shows that varying $\alpha$ (user bias) and $\beta$ (target bias) can significantly affect performance, while Table~\ref{tab:input_aug} shows that tuning $\gamma$ under different $(\alpha,\beta)$ settings further influences results. 
Both trends and sensitivities vary across datasets, emphasizing the need for flexible parameter control to shape the training distribution for effective generative recommendation.

\smallsection{Validation of Parameter Search.}
We evaluate the effectiveness of our parameter search method (Section~\ref{sec:method:parameter_search}).
As shown in Figure~\ref{fig:param}, SASRec performs better when the KL divergence between training and test target distributions is low and when the alignment-to-discrimination ratio across training and test input–target distributions is high.
This supports our parameter search strategy that retains a subset of $(\alpha,\beta,\gamma)$ configurations with low target KL divergence and high alignment-to-discrimination ratios.

\subsection{Q2. Efficiency }\label{sec:exp:efficiency}
We evaluate the \textit{efficiency} of model training with \method in three aspects: search, parameter, and data efficiency.


\begin{table}[t!]
    \centering
    \caption{
    Selected $(\alpha,\beta,\gamma)$ configurations of \method from the reduced search space for each dataset and model.
    \colorbox{green!10}{\textcolor{black}{Green}} indicates training data that improves both the target distribution KL divergence and the alignment–discrimination ratio of the input–target distribution compared to MT (see Table~\ref{tab:kl_align_disc}).
    \colorbox{red!10}{\textcolor{black}{Red}} indicates degradation.
    \colorbox{gray!10}{\textcolor{black}{Gray}} indicates no change.
    }
    \label{tab:alpha_beta_gamma}
    \setlength\tabcolsep{4.75pt}
    \renewcommand{\arraystretch}{1.1}
    \scalebox{0.89}{
    \begin{tabular}{lc|ccccc}
        \toprule
         && \textbf{Beauty} & \textbf{Toys} & \textbf{Sports} & \textbf{ML1M} & \textbf{ML20M} \\
        \midrule
        \multirow{3}{*}{SASRec} &
        $(\alpha,\beta,\gamma)$ & (1, 0, 0) & (1, 0, 0) & (1, 0, 0) & (1, 0, 0) & (0, 2, -$\infty$)  \\
        & KL ($\downarrow$) & \cellcolor{gray!10}0.898 & \cellcolor{gray!10}1.062 & \cellcolor{gray!10}0.819 & \cellcolor{gray!10}0.495 & \cellcolor{green!10}0.158 \\
        & A/D ($\uparrow$) & \cellcolor{green!10}17.95 & \cellcolor{green!10}24.57 & \cellcolor{green!10}11.09 & \cellcolor{red!10}3.68 & \cellcolor{green!10}12.98 \\
        \midrule
        \multirow{3}{*}{TIGER} & 
        $(\alpha,\beta,\gamma)$ & (1, 0, 1) & (1, 0, 1) & (1, 1, 1) & (1, 0, 1) & (0, 2, -$\infty$) \\
        & KL ($\downarrow$) & \cellcolor{gray!10}0.898 & \cellcolor{gray!10}1.062 & \cellcolor{green!10}0.768 & \cellcolor{gray!10}0.495 & \cellcolor{green!10}0.158 \\
        & A/D ($\uparrow$)  & \cellcolor{green!10}18.80 & \cellcolor{green!10}25.54 & \cellcolor{green!10}11.11 & \cellcolor{red!10}3.86 & \cellcolor{green!10}12.98 \\
        \bottomrule
    \end{tabular}}
\end{table}

\begin{figure}[t!]
    \centering
    \includegraphics[width=0.495\linewidth]{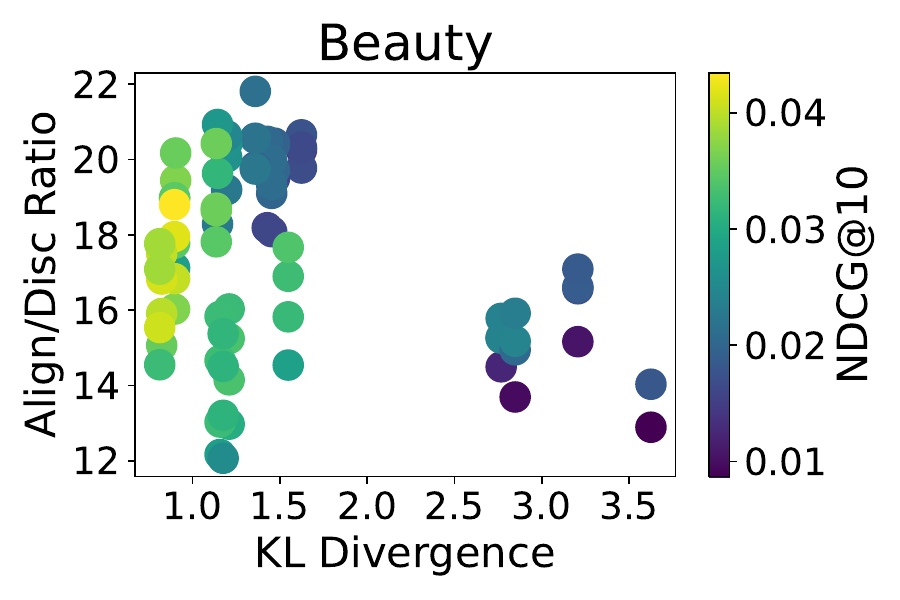}
    \includegraphics[width=0.495\linewidth]{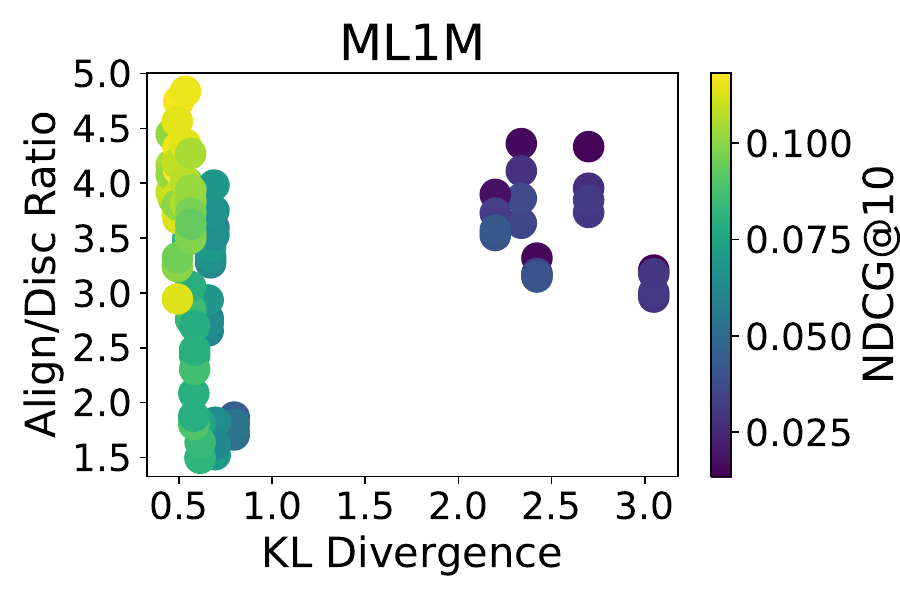}
    \caption{
    The performance (NDCG@10) of SASRec tends to improve when the KL divergence between the training and test target distributions is low, and when the ratio of alignment to discrimination in the input-target distribution between training and test sets is high.
    }
    \label{fig:param}
\end{figure}

\begin{table*}[t!]
    \centering
    \caption{
    Comparison of sequence-level augmentation methods and \method across different $(\alpha,\beta)$ configurations, in terms of NDCG@10.
    While baseline strategies perturb input sequences, \method samples contiguous subsequences of varying lengths by flexibly controlling $\gamma$.
    Across diverse settings, \method achieves superior performance when $\gamma$ is properly chosen.
    \label{tab:input_aug}
    }
    \setlength\tabcolsep{5.7pt}
    \renewcommand{\arraystretch}{1.07}
    \scalebox{0.885}{
    \begin{tabular}{l|ccc|ccc|ccc|ccc|ccc}
        \toprule
        & \multicolumn{3}{c|}{\textbf{Beauty}} & \multicolumn{3}{c|}{\textbf{Toys}} & \multicolumn{3}{c|}{\textbf{Sports}} & \multicolumn{3}{c|}{\textbf{ML1M}} & \multicolumn{3}{c}{\textbf{ML20M}} \\
        $(\alpha,\beta)\rightarrow$ & $(0,\infty)$ & $(1,0)$ & $(2,1)$ & $(0,\infty)$ & $(1,0)$ & $(2,1)$ & $(0,\infty)$ & $(1,0)$ & $(2,1)$ & $(0,\infty)$ & $(1,0)$ & $(2,1)$ & $(0,\infty)$ & $(1,0)$ & $(2,1)$ \\
        \midrule
        Insert & 
        0.0192 & 0.0399 & 0.0310 & 
        0.0217 & 0.0446 & 0.0322 & 
        0.0080 & 0.0203 & 0.0136 & 
        0.0182 & 0.1179 & 0.0984 &
        0.0637 & 0.0973 & 0.0679
        \\
        Delete & 
        0.0147 & 0.0324 & 0.0233 & 
        0.0152 & 0.0349 & 0.0227 & 
        0.0052 & 0.0156 & 0.0092 & 
        0.0153 & 0.0711 & 0.0713 &
        0.0531 & 0.0619 & 0.0462
        \\
        Replace & 
        0.0148 & 0.0308 & 0.0243 & 
        0.0149 & 0.0284 & 0.0183 & 
        0.0051 & 0.0130 & 0.0066 & 
        0.0176 & 0.1122 & 0.0971 &
        0.0605 & 0.0893 & 0.0652
        \\
        Reorder & 
        0.0129 & 0.0353 & 0.0267 & 
        0.0128 & 0.0353 & 0.0225 & 
        0.0041 & 0.0154 & 0.0092 & 
        0.0147 & \underline{\smash{0.1202}} & 0.1074 &
        0.0624 & 0.0969 & 0.0701
        \\
        Sample & 
        0.0159 & 0.0376 & 0.0299 & 
        0.0170 & 0.0383 & 0.0270 & 
        0.0060 & 0.0171 & 0.0110 & 
        0.0194 & 0.1143 & 0.0989 &
        0.0601 & 0.0911 & 0.0649
        \\
        \midrule
        \rowcolor{gray!10}
        $\gamma=-\infty$ & 
        0.0124 & 0.0372 & 0.0274 & 
        0.0121 & 0.0378 & 0.0226 & 
        0.0037 & 0.0162 & 0.0091 &
        0.0136 & 0.1194 & 0.1100 &
        0.0628 & \textbf{0.0995} & 0.0564
        \\
        \rowcolor{gray!10}
        $\gamma= -1$ & 
        0.0221 & 0.0403 & 0.0323 & 
        0.0267 & 0.0455 & 0.0334 &  
        0.0099 & 0.0195 & 0.0133 &  
        0.0323 & 0.1107 & 0.0993 &
        0.0693 & \underline{\smash{0.0973}} & 0.0667 
        \\
        \rowcolor{gray!10}
        $\gamma =0$ &
        0.0236 & \textbf{0.0426} & 0.0323 & 
        0.0287 & \textbf{0.0481} & 0.0354 & 
        0.0109 & \textbf{0.0219} & 0.0149 & 
        0.0382 & \textbf{0.1230} & 0.1022 & 
        0.0749 & 0.0937 & 0.0526
        \\
        \rowcolor{gray!10}
        $\gamma=1$ & 
        0.0240 & \underline{\smash{0.0410}} & 0.0323 & 
        0.0287 & \underline{\smash{0.0473}} & 0.0354 &  
        0.0112 & \underline{\smash{0.0216}} & 0.0144 &  
        0.0420 & 0.1041 & 0.0977 &
        0.0759 & 0.0869 & 0.0625 
        \\
        \bottomrule
    \end{tabular}}
\end{table*}

\smallsection{Search Efficiency.}
We examine how our parameter search scheme (Section~\ref{sec:method:parameter_search}) reduces model tuning time. 
For example, on Beauty, training SASRec and TIGER takes  roughly 1 and 9 hours, respectively, making exhaustive tuning over  $\mathcal{S}$ expensive. 
Instead, we retain only $k / |\mathcal{S}|$ ($10/100$ for our public data experiments) of configurations. 
Computing KL divergence takes 0.07 seconds per configuration, and alignment and discrimination take 52 seconds, both negligible compared to training time.

\smallsection{Parameter Efficiency.}\label{app:details_experiments:large_model}
In Figure~\ref{fig:large}, we compare small models using \method with larger ones with other strategies.
Our default SASRec uses (embedding dimension, \# attention heads, \# layers) = (128, 1, 2).
For large models, we scale capacity to: (256, 1, 2), (256, 2, 2), (512, 2, 2), (512, 2, 4), (512, 2, 8), (512, 4, 2), (512, 4, 4), and (512, 4, 8).
Despite significantly fewer parameters, small models trained with \method consistently outperform much larger counterparts, demonstrating strong parameter efficiency.

\smallsection{Data Efficiency.}
To assess data efficiency, we train models with \method on reduced training subsets and compare them to full-data without augmentation.
As shown in Figure~\ref{fig:small_data}, the model equipped with \method achieves strong performance even with minimal data.
For example, it outperforms the full-data baseline using only 1\% of the data on ML1M.
This demonstrates that \method effectively leverages available data to improve learning efficiency.

\subsection{Q3. Generalizability}\label{sec:exp:generalizability}
We evaluate \method's \textit{generalizability}, whether it consistently improves performance across diverse and practical scenarios.

\smallsection{Long-Tail Performance.}
To assess how well \method generalizes to infrequent items, we divide items into three equal-sized groups (G1–G3), where G1 contains the least popular and G3 the most popular items in the training set. 
As shown in Figure~\ref{fig:long_tail}, models trained on \method-augmented data outperform those trained with other augmentation strategies across all groups, demonstrating \method’s improved generalization to less popular items.

\begin{figure}[t!]
    \centering
    \includegraphics[width=0.215\linewidth]{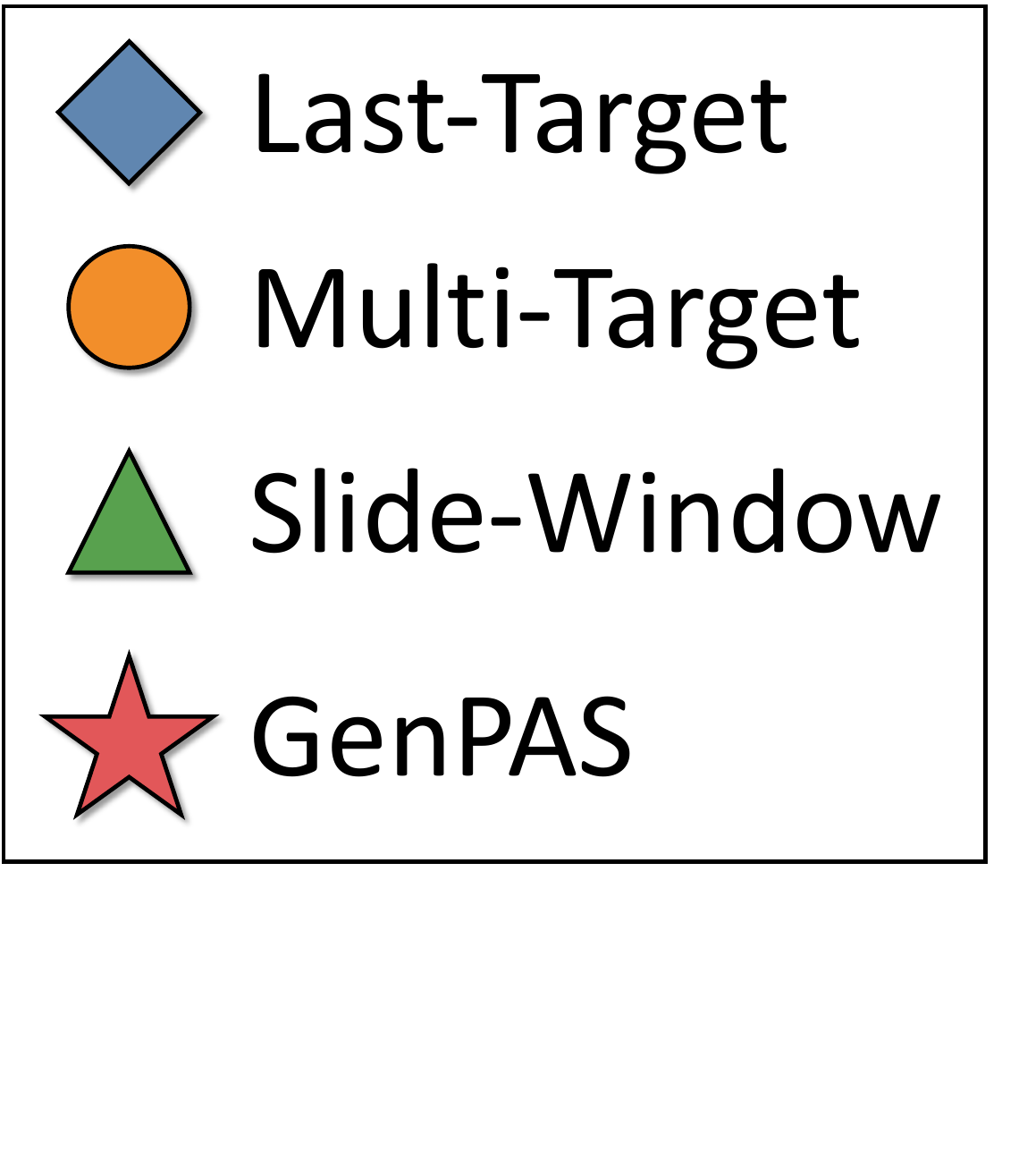}
    \hspace{12pt}
    \includegraphics[width=0.47\linewidth]{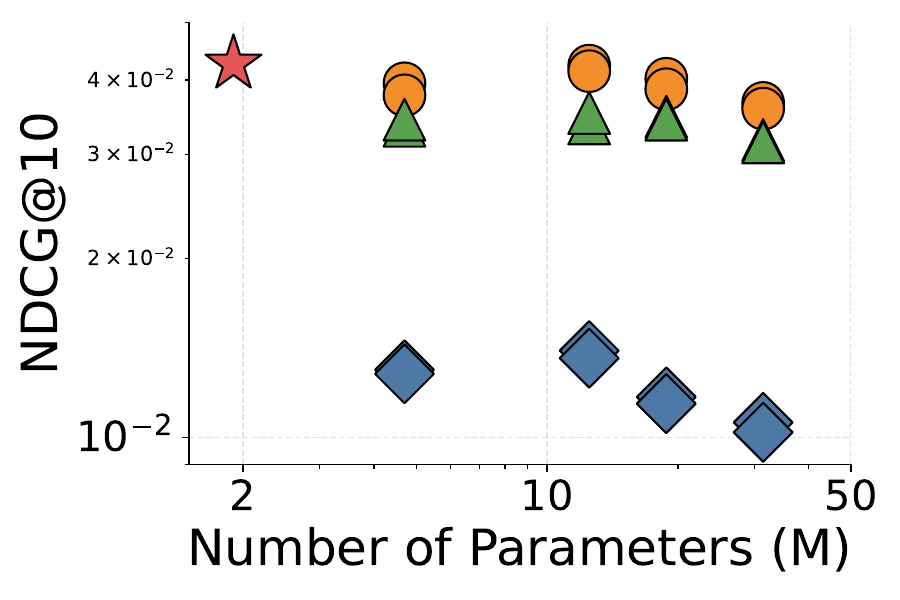}
    \caption{
    SASRec trained on \method-augmented data outperforms larger-parameter variants trained with other strategies, demonstrating \method's parameter efficiency.
    }
    \label{fig:large}
\end{figure}



\smallsection{Large-scale Data.}
We study how  \method generalizes to industrial-scale data using the \textbf{Internal} dataset in
Table \ref{tab:Internal}. Here, we evaluate on the test targets from a random subset of 200k training users (transductive setting) and a set of 200k users unseen during training (inductive setting). We randomly sample 2000 negative items for evaluation in both  settings, and compare against the internal baseline that uses LT training. Even on this large-scale data, for which the default LT dataset already has many training samples, augmenting via GenPAS  confers large performance improvements.





\vspace{-2mm}

\section{Related Work}
\label{sec:related}

In this section, we review prior work relevant to our study.

\smallsection{Generative Recommendation (GR).}
Modeling sequential user behavior is crucial for capturing user preferences.
Early methods relied on the Markov Chain assumptions, predicting the next interaction from recent ones~\cite{he2016fusing,rendle2010factorizing}.
With deep learning, more expressive models emerged, including GRUs~\cite{hidasi2015session}, RNNs~\cite{li2017neural,wang2020intention}, CNNs~\cite{jiang2023adamct,tang2018personalized}, and GNNs~\cite{chang2021sequential,wang2020modelling}, to capture complex dependencies.
Attention-based models soon became dominant in sequential recommendation~\cite{kang2018self,sun2019bert4rec,li2020time}, selectively focusing on relevant past interactions.
Recently, transformer-based ``generative'' models have been increasingly adopted for sequential recommendation~\cite{lin2025can,zhao2024recommender,rajput2023recommender,jin2023language,petrov2023generative,chen2024enhancing}, following successes in NLP~\cite{vaswani2017attention,bai2023qwen,brown2020language} and computer vision~\cite{dosovitskiy2020image,he2022masked}.  
Following this paradigm, several works explore semantic identifiers in generative recommendation~\cite{rajput2023recommender, deng2025onerec, lee2025gram, grid}.

\begin{figure}[t]
    \centering
    \includegraphics[width=0.8\linewidth]{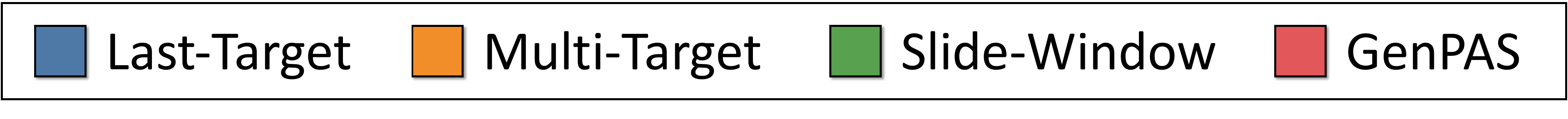}\\
    \includegraphics[width=0.4525\linewidth]{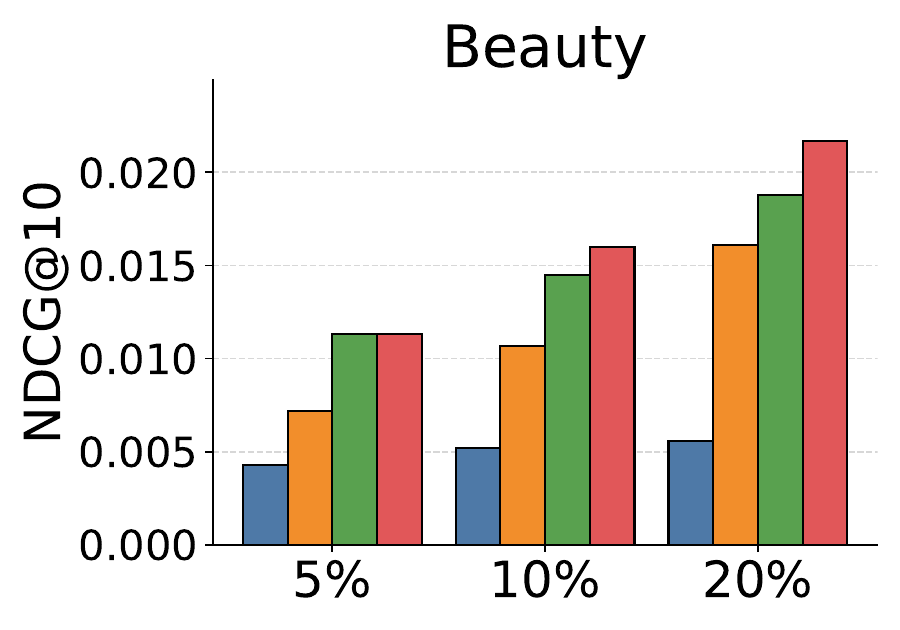}
    \hspace{3pt}
    \includegraphics[width=0.4525\linewidth]{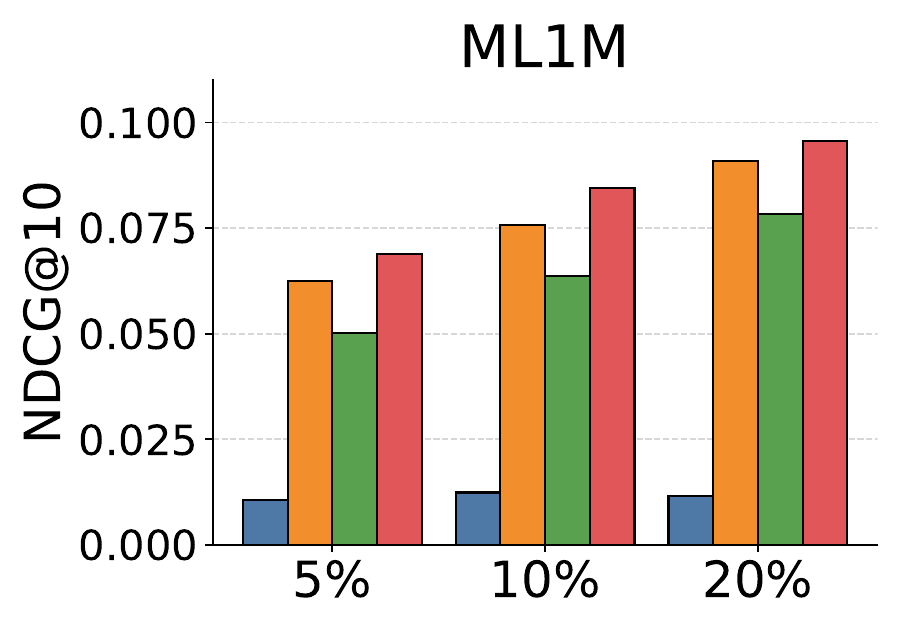}
    \caption{
    \method enhances the data efficiency.
    SASRec with \method outperforms the full-data baseline without augmentation, even when trained on 5, 10, 20\% of the original data. 
    \label{fig:small_data}}
\end{figure}

\begin{figure}[t]
    \centering
    \includegraphics[width=0.8\linewidth]{FIG_NEW/legend_all.pdf}\\
    \includegraphics[width=0.4525\linewidth]{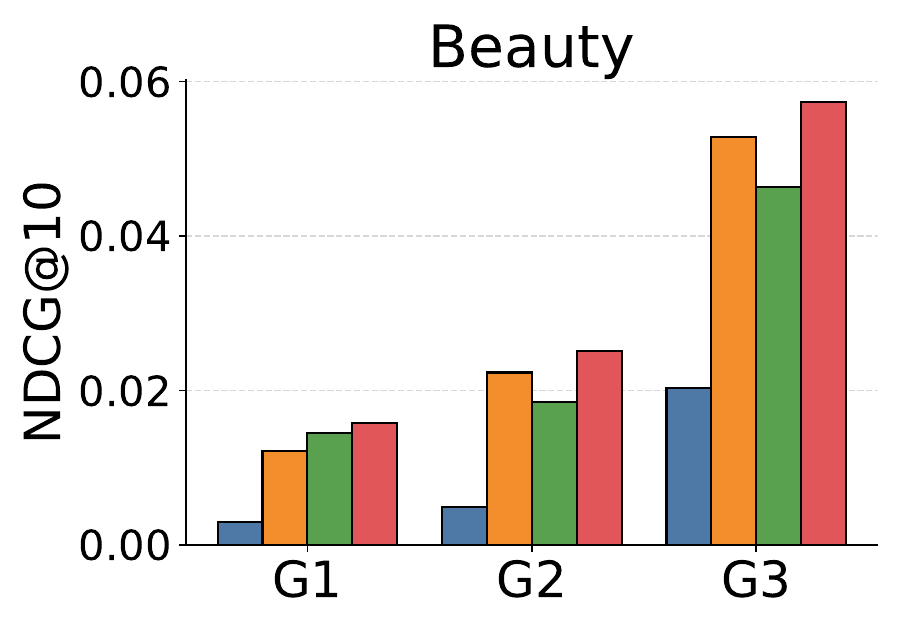}
    \hspace{-3pt}
    \includegraphics[width=0.4525\linewidth]{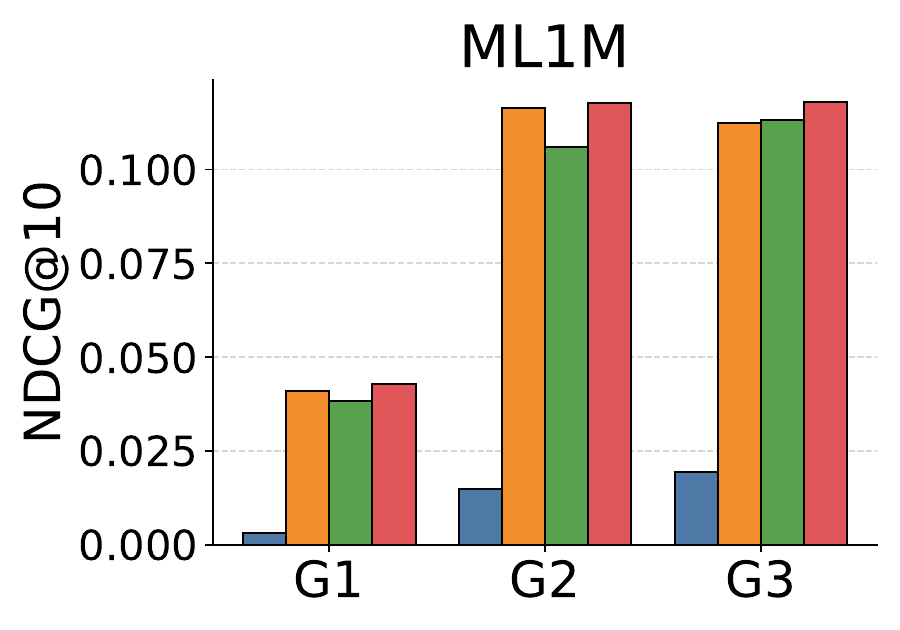}
    \caption{
    \method enhances long-tail performance. 
    SASRec equipped with \method consistently outperforms the non-augmented model across all item groups, from the least popular (G1) to the most popular (G3).
    \label{fig:long_tail}}
\end{figure}

\smallsection{Data Augmentation for GR.}
Data augmentation in GR refers to shaping the training data by controlling the distribution of input-target pairs. 
While the Last-Target strategy predicts only the final item~\cite{zhou2024contrastive}, SASRec~\cite{kang2018self} and many follow-up studies use the Multi-Target strategy that includes intermediate items as prediction targets~\cite{zhou2020s3,chen2022intent}. 
A recent reproducibility study~\cite{grid} showed that reliably reproducing TIGER~\cite{rajput2023recommender} requires the Slide-Window strategy, which generates input–target pairs from all subsequence windows.
These findings suggest that the choice of prediction targets within a sequence can substantially influence both model training and reported performance.
Beyond target selection, GR explores various input transformation strategies, including item insertion, deletion, replacing, reordering, and sampling~\cite{online2025appendix,zhou2024contrastive}.
More advanced approaches include counterfactual augmentation~\cite{wang2021counterfactual,zhang2021causerec} and test-time augmentation~\cite{dang2025data} (see surveys~\cite{dang2024data,song2022data}).
While most prior work employs augmentation as a tool for contrastive learning~\cite{liu2021contrastive,xie2022contrastive,chen2022intent}, recent findings suggest that augmentation itself can be highly effective for GR~\cite{xie2022contrastive}. Nevertheless, our work is the first to thoroughly investigate sequence selection in GR.




\section{Conclusions and Future Directions}
\label{sec:conc}
In this work, we revisited data augmentation for generative recommendation and revealed that widely used strategies, though simple and popular, can be far from optimal.
Through systematic analysis, we showed how augmentation reshapes both the target and input–target distributions, directly influencing alignment with future targets and generalization to unseen inputs.
Based on our observations, we have developed a novel, effective method for tuning the strategy. Our analysis is data-centric; future work remains to study how  GR model architecture and training objectives influence the optimal sequence selection strategy.



\section*{Acknowledgments}
This work was partly supported by Institute of Information \& Communications Technology Planning \& Evaluation (IITP) grant funded by the Korea government (MSIT) (No. RS-2024-00438638, EntireDB2AI: Foundations and Software for Comprehensive Deep Representation Learning and Prediction on Entire Relational Databases).

\appendix
\label{sec:apx}
\begin{table}[t!]
    \centering
    \caption{
    The performance of \method  on large-scale data.
    \label{tab:Internal}
    }
    \setlength\tabcolsep{3pt}
    \renewcommand{\arraystretch}{1.1}
    \scalebox{0.83}{
   \begin{tabular}{c|cc|cc|cc|cc}
        \toprule
        & \multicolumn{4}{c|}{\textbf{Transductive}} & \multicolumn{4}{c}{\textbf{Inductive (New Users)}} \\
        & \multicolumn{2}{c|}{Baseline (LT)} & \multicolumn{2}{c|}{\method} & \multicolumn{2}{c|}{Baseline (LT)} & \multicolumn{2}{c}{\method} \\
        & { N@10} & { R@10} & { N@10} & {R@10} & { N@10} & { R@10} & { N@10} & { R@10} \\
        \midrule
        & 0.1904 & 0.3144 & \textbf{0.2059} & \textbf{0.3512} & 0.1961  &  0.3144 & \textbf{0.2104} & \textbf{0.3357} \\
        Improv. & -- & -- & \textbf{+8.14\%} & \textbf{+11.7\%} & --  &  -- & \textbf{+7.29\%} & \textbf{+6.77\%} \\
        \bottomrule
    \end{tabular}}
\end{table}

\section{Generalization and Proof of Theorem \ref{thm:targets}} \label{sec:theory}
Theorem \ref{thm:targets} follows from the below result by setting $\beta\in \{0,1,2\}$.
\begin{theorem} \label{thm:targets-gen}
    Suppose Assumptions \ref{assump:indep_users} and \ref{assump:indep_items} hold. Denote $\delta_k := \text{TV}(p_k, p_{n+1})$ and $m:= |\mathcal{U}|$, and let ${p}^{\beta}_{\text{train}}$ be the training target distribution induced via \method with the corresponding $\beta$.  Then, with probability at least $0.99$,
    we have 
    \begin{align*}
        \text{TV}({p}_{\text{train}}^{\beta}, p_{n+1}) &= 
        O\left(\frac{\sum_{k=1}^n k^\beta \delta_k + |\mathcal{I}|\sqrt{\frac{\log(|\mathcal{I}|)}{m}\sum_{k=1}^n k^{2\beta} } }{\sum_{k=1}^n k^{\beta}} \right).
    \end{align*}
\end{theorem}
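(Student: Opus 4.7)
The plan is to bound $\text{TV}(p_{\text{train}}^\beta, p_{n+1})$ by a bias--variance decomposition about the \emph{population} GenPAS target distribution. Define
$\bar p^{\beta}(y) := \sum_{k=2}^{n} \tfrac{(k-1)^\beta}{Z_\beta}\, p_k(y)$, where $Z_\beta := \sum_{k=2}^{n} (k-1)^\beta$, which is the expectation of $p_{\text{train}}^\beta(y)$ under $P_{\text{pop}}$ given the per-position marginals $\{p_k\}$. Note that Assumption~\ref{assump:indep_users} forces every sequence to have length $n$, so the sequence-sampling factor $p_\alpha(u)$ reduces to $1/m$ and does not interact with the argument. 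The triangle inequality then gives $\text{TV}(p_{\text{train}}^\beta, p_{n+1}) \leq \text{TV}(\bar p^{\beta}, p_{n+1}) + \text{TV}(p_{\text{train}}^\beta, \bar p^{\beta})$, which I would handle term by term.

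For the bias term $\text{TV}(\bar p^{\beta}, p_{n+1})$, I would use that the GenPAS weights $(k-1)^\beta/Z_\beta$ sum to $1$ to write $\bar p^{\beta}(y) - p_{n+1}(y) = \sum_{k} \tfrac{(k-1)^\beta}{Z_\beta}(p_k(y) - p_{n+1}(y))$, then take absolute values pointwise, sum over $y$, and identify $\tfrac{1}{2}\sum_y |p_k(y)-p_{n+1}(y)| = \delta_k$. This yields $\text{TV}(\bar p^{\beta}, p_{n+1}) \leq \sum_{k=2}^{n} \tfrac{(k-1)^\beta}{Z_\beta}\,\delta_k$, which, after the harmless index shift $k \mapsto k-1$, matches the first term of the stated $O(\cdot)$.

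For the variance term, for each item $y$ I would decompose $p_{\text{train}}^\beta(y) - \bar p^{\beta}(y) = \tfrac{1}{m}\sum_{u\in\mathcal{U}} Y_{u,y}$, where $Y_{u,y} := \sum_{k=2}^{n} \tfrac{(k-1)^\beta}{Z_\beta}\bigl(\mathbbm{1}[i_k^{(u)} = y] - p_k(y)\bigr)$. By Assumption~\ref{assump:indep_users} the $\{Y_{u,y}\}_u$ are i.i.d.\ and mean-zero, and by Assumption~\ref{assump:indep_items} each $Y_{u,y}$ is a weighted sum of \emph{independent} centered Bernoullis across positions, so $|Y_{u,y}| \leq 1$ while $\text{Var}(Y_{u,y}) \leq \sum_{k=2}^n (k-1)^{2\beta}/Z_\beta^2$. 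Bernstein's inequality then gives a per-item deviation of order $\sqrt{(\log(1/\delta)/m)\sum_k (k-1)^{2\beta}/Z_\beta^2}$ plus a lower-order $O(\log(1/\delta)/m)$ boundedness term. Taking a union bound over $y \in \mathcal{I}$ with $\delta = 1/(100|\mathcal{I}|)$ and summing the per-item deviations to form $\text{TV}$ yields the $|\mathcal{I}|\sqrt{\log(|\mathcal{I}|)\sum_k k^{2\beta}/m}\big/\sum_k k^{\beta}$ factor with total failure probability at most $0.01$.

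The main obstacle is the concentration step: a direct Hoeffding bound on $|Y_{u,y}|\leq 1$ would produce only $O(\sqrt{\log|\mathcal{I}|/m})$ per item and would miss the crucial factor $\sqrt{\sum_k (k-1)^{2\beta}}/Z_\beta$, which for $\beta=0$ scales like $1/\sqrt{n}$ and is precisely what drives the $n^{-1/2}$ improvement of MT/SW over LT visible in Theorem~\ref{thm:targets}. Variance-aware concentration (Bernstein, or equivalently a moment-generating-function calculation that explicitly exploits the across-position independence from Assumption~\ref{assump:indep_items}) is therefore essential. Minor care is needed to calibrate the $0.99$ confidence level through the union-bound constant and to check that the Bernstein boundedness term is absorbed into the stated $O(\cdot)$ in the regimes of interest; the reindexing $k \mapsto k-1$ in the summations is routine and does not affect the asymptotic form.
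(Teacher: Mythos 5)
Your proposal is correct and follows essentially the same route as the paper: the same triangle-inequality decomposition about the population mixture $\bar p^{\beta}$, the same bound $\sum_k \omega_{\beta,k}\delta_k$ on the bias term, and a variance-aware concentration step followed by a union bound over $\mathcal{I}$ for the sampling term. The only cosmetic difference is that the paper extracts the crucial $\sqrt{\sum_k k^{2\beta}}\big/\sum_k k^{\beta}$ factor not via Bernstein but via the weighted form of Hoeffding's inequality applied to the $(u,k)$-indexed collection of independent terms each bounded in an interval of length $\omega_{\beta,k}/m$ --- this already exploits the across-position independence you correctly identify as essential, and it avoids the lower-order Bernstein remainder that you would otherwise need to argue is absorbed.
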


\begin{proof}
Please see the Online Appendix~\cite{online2025appendix}.
\end{proof}







\section{Sequence-Level Augmentation Strategies}\label{app:input_aug}
We review sequence-level strategies for GR~\cite{zhou2024contrastive} tested in Section~\ref{sec:exp:effectiveness}. 
All take input $s^{(u)}$.

\noindent\textbf{Insert.}
Inserts a randomly sampled item $i^*\in \mathcal{I}$ at a random position $k^* \in \{1,\dots,|s^{(u)}|-1\}$:
\begin{equation*}\label{eq:insert}\small
    \tilde{x}^{(u)} = \left[ i_1^{(u)},\dots,i_{k^*-1}^{(u)},\;i^* \;,i_{k^*}^{(u)},\dots,i_{|s^{(u)}|-1}^{(u)} \right].
\end{equation*}

\noindent\textbf{Delete.}
Removes the item at a randomly selected position $k^*\in\{1,\dots,|s^{(u)}|-1\}$:
\begin{equation*}\label{eq:delete}\small
    \tilde{x}^{(u)} = \left[ i_1^{(u)},\dots,i_{k^*-1}^{(u)},i_{k^*}^{(u)},\dots,i_{|s^{(u)}|-1}^{(u)} \right].
\end{equation*}

\noindent\textbf{Replace.}
Replaces the item at a randomly selected position $k^*\in\{1,\dots,|s^{(u)}|-1\}$ with a randomly sampled item $i^*\in\mathcal{I}$:
\begin{equation*}\label{eq:replace}\small
\tilde{x}^{(u)} = \left[ i_1^{(u)}, \dots, i_{k^*-1}^{(u)},\; i^*,\; i_{k^*}^{(u)}, \dots, i_{|s^{(u)}|-1}^{(u)} \right].
\end{equation*}

\noindent\textbf{Reorder.}
Shuffles a contiguous subsequence of length $\delta$.
Let $k^*\in\{1,\dots,|s^{(u)}|-\delta\}$ be the start index, and let $\texttt{shuffle}(\cdot)$ denote a random permutation function.
The augmented input sequence is:
\begin{equation*}\label{eq:reorder}\small
\tilde{x}^{(u)} = \left[ i_1^{(u)}, \dots, i_{k^* - 1}^{(u)}, {\small\texttt{shuffle}}(i_{k^*}^{(u)}, \dots, i_{k^* + \delta - 1}^{(u)}), i_{k^* + \delta}^{(u)}, \dots, i_{|s^{(u)}|-1}^{(u)} \right].
\end{equation*}

\noindent\textbf{Sample.}
Samples each item with a retention probability $\omega\in(0,1)$:
\begin{equation*}\label{eq:sample}\small
\tilde{x}^{(u)} = \left[ \widehat{i}_{1}^{(u)}, \dots, \widehat{i}_{|s^{(u)}|-1}^{(u)} \right] \;\;\text{where}\;\;\; \widehat{i}_k^{(u)} = \begin{cases}
i_k^{(u)}, & \text{with prob.}\; \omega \\
\varnothing, & \text{with prob.}\; 1 - \omega
\end{cases}
\end{equation*}
for $k=1,\dots,|s^{(u)}|-1$.



\section*{Ethical Considerations}
\label{sec:ethic}
\method can influence item exposure in generative recommendation, potentially exacerbating popularity bias if misused. 
To mitigate this, we suggest properly tuning parameters $\alpha$, $\beta$, and $\gamma$ to maintain balanced target distributions and encourage diverse inputs.


\bibliographystyle{ACM-Reference-Format}
\balance
\bibliography{ref}

\end{document}